\documentclass[journal]{IEEEtran}

\usepackage{amsmath,amsfonts,amssymb,amsthm}
\usepackage{tikz}
\usetikzlibrary{arrows.meta,shapes,positioning,fit,backgrounds,calc}
\usepackage{graphicx}
\usepackage{algorithm}
\usepackage{algpseudocode}
\usepackage{booktabs}
\usepackage{array}
\usepackage{multirow}
\newcolumntype{L}[1]{>{\raggedright\arraybackslash}p{#1}}
\newcolumntype{C}[1]{>{\centering\arraybackslash}p{#1}}
\usepackage{xcolor}
\usepackage[hidelinks]{hyperref}
\usepackage{caption}
\usepackage{cite}
\usepackage{enumitem}

\newtheorem{proposition}{Proposition}
\newtheorem{theorem}{Theorem}
\newtheorem{corollary}{Corollary}[theorem]
\newtheorem{lemma}{Lemma}
\newtheorem{remark}{Remark}
\newtheorem{definition}{Definition}

\usepackage{comment}

\title{\textsc{TierSecAgg}: Tier-Stratified Secure Aggregation
for SignSGD-Based Hierarchical Federated Learning
in IoT Edge--Cloud Systems}

\author{Hyeong-Gun Joo,~\IEEEmembership{Member,~IEEE,} Songnam Hong,~\IEEEmembership{Senior Member,~IEEE,} and Dong-Joon Shin,~\IEEEmembership{Senior Member,~IEEE}%
\thanks{The authors are with the Department of Electronic Engineering, Hanyang University, Seoul, South Korea, Corresponding author: Dong-Joon Shin, e-mail: djshin@hanyang.ac.kr.}%
}

\begin{document}

\maketitle

\begin{abstract}
Hierarchical federated learning (HFL) with sign-based majority vote (signSGD-MV) is well suited to bandwidth-constrained IoT edge--cloud systems, but its secure realization has so far been confined to flat single-server protocols, which do not extend to the hierarchical setting. The majority vote is non-linear, so additive secure aggregation reveals the exact vote tally rather than only the majority bit. Applying a flat protocol within each cluster also exposes each cluster's vote to the cloud, while intermediate edges that are not mutually trusted can fabricate the single weighted value each reports.
Moreover, since the device--edge and edge--cloud boundaries face different trust regimes, no single flat protocol can cover both. We therefore present \textsc{TierSecAgg}, which decomposes secure aggregation for HFL signSGD-MV into two tiers---device--edge (DE) and edge--cloud (EC)---connected through a one-directional, information-minimal interface. 
The DE tier introduces single-mask power sharing to realize per-cluster HFL operation, providing device-level privacy and joint dropout/Byzantine-share recovery over heterogeneous and round-varying active sets. 
The EC tier removes this cluster-vote exposure and counters report fabrication by Shamir-sharing each edge's OTP-masked weighted output itself as the secret, together with an online consistency check and MDS decoding. Among the examined OTP/Shamir candidates, a strawman analysis shows that this is the only commitment-free design satisfying all requirements. The resulting framework jointly provides privacy, dropout recovery, and aggregation-layer Byzantine recovery without adversary-specific verification subprotocols, and a fabrication taxonomy separates the edge behaviors that are detected or corrected from those whose impact is only provably bounded. Experiments on standard and IoT-relevant workloads confirm that the secure realization preserves plaintext learning behavior under honest execution, while the resulting communication and threshold trade-offs are analytically characterized.
\end{abstract}

\begin{IEEEkeywords}
Hierarchical federated learning, secure aggregation,
signSGD majority vote, tier-stratified architecture.
\end{IEEEkeywords}

\section{Introduction}
\label{sec:intro}

IoT deployments such as smart cities, connected healthcare, and industrial monitoring generate privacy-sensitive data at constrained devices, while intermediate edge servers operate under a trust regime distinct from that of the cloud. Federated learning (FL) on such systems must reconcile a severe device-side bandwidth budget with the need to keep aggregated data private as it crosses the edge layer. Hierarchical FL (HFL) matches this edge--cloud structure by placing edges as intermediate aggregators~\cite{kazemi2026hierarchical}, and sign-based communication with edge-level majority vote (signSGD-MV)~\cite{bernstein2018signsgd,safaryan2021stochastic} meets the device-side bandwidth budget by reducing each device's uplink to one bit per coordinate, leaving privacy across the edge layer as the open requirement.

Existing secure-aggregation work suitable for signSGD-MV is confined to Hi-SAFE~\cite{joo2024hisafe}, a flat semi-honest protocol.
Generic additive SecAgg~\cite{bonawitz2017practical,bell2020secure,kadhe2020fastsecagg,so2022lightsecagg} cannot be applied directly because majority vote is non-linear, and a ``reconstruct-then-sign'' detour reconstructs the sum of the sign votes, which reveals the exact vote tally rather than only the one-bit majority output. Beyond these flat-setting obstacles, the hierarchical setting is not a routine extension. Per-cluster outputs become visible to the cloud. Intermediate edges, often run by different organizations, are mutually distrusting and can fabricate the single value they report. In addition, aggregation is weighted by heterogeneous cluster sizes.

A key limitation of existing FL security mechanisms is that privacy, dropout resilience, and aggregation-layer Byzantine robustness are rarely achieved within a single aggregation architecture. Dropout-resilient secure aggregation schemes~\cite{bonawitz2017practical,bell2020secure,kadhe2020fastsecagg,so2022lightsecagg} protect individual updates and tolerate client failures, but do not prevent poisoned or Byzantine shares from corrupting the recovered aggregate. Conversely, Byzantine-robust aggregation rules~\cite{blanchard2017machine,yin2018byzantine,pillutla2022robust,cao2021fltrust} require access to individual updates or their statistics, which is fundamentally at odds with secure aggregation. Recent privacy-preserving Byzantine-robust FL protocols address this conflict by introducing adversary-specific verification or secure-computation mechanisms such as commitments, verifiable secret sharing, zero-knowledge proofs, and coded-computation checks~\cite{so2020byzantine,burkhalter2023rofl,jahani2023byzsecagg,mai2024rflpa,xia2024byitfl,fan2025byzsfl}. However, they are layered on top of the underlying secure aggregation rather than obtained from its own structure, and they target additive aggregation rather than signSGD majority vote. In contrast, to the best of our knowledge, this work is the first to integrate all three requirements into a single tier-stratified HFL architecture for signSGD-MV, without relying on additional adversary-specific verification subprotocols.

We design and analyze secure aggregation for HFL signSGD-MV as a two-tier architecture connected by a one-directional, information-minimal interface. The DE tier is specified abstractly through an ideal functionality that returns only the cluster majority vote and provides constant-round operation, $t$-privacy, and joint dropout/Byzantine tolerance. This abstraction enables modular composition with any EC-tier protocol that consumes only the cluster-level output. Its realization exploits a structural reduction specific to signSGD-MV. 
Since the cluster aggregate is linear in the device inputs, the devices obtain their shares by local addition, and the majority vote is a univariate polynomial of this single shared aggregate. Therefore, secure evaluation reduces to masked-opening polynomial evaluation using preshared mask powers~\cite{cho2018,couteau2019,reisert2024}.
We call this construction single-mask power sharing and realize it at per-cluster HFL granularity over heterogeneous round-varying active sets, cluster-local thresholds, and the DE/EC interface. 

At the EC tier, where the hierarchy-specific security surface arises, each edge Shamir-shares its one-time-pad (OTP) masked weighted output \emph{itself} as the secret. Then, all edges run a consistency check, and the cloud applies maximum-distance-separable (MDS) decoding to recover the masked output without learning the unmasked value.
The novelty of our work lies in the tier-stratified decomposition, the single-mask power-sharing realization of the DE tier, and the identification of an EC-tier security surface that prior work leaves open: hiding each edge's output from the very party that reconstructs it while still recovering a consistent value from every edge, as clarified by the strawman analysis in Section~\ref{sec:strawman}.

\subsection{Contributions}
\begin{itemize}[leftmargin=*,itemsep=2pt]
  \item \textbf{Tier-stratified architecture.} We decompose secure aggregation for HFL signSGD-MV into a DE tier and an EC tier connected by a one-directional, information-minimal interface, enabling independent analysis and modular composition.

  \item \textbf{Hierarchical DE-tier realization.} We introduce single-mask power sharing, a per-cluster adaptation of masked-opening polynomial evaluation with preshared mask powers~\cite{cho2018,couteau2019,reisert2024}, and realize the DE-tier ideal functionality $\mathcal{F}_{\mathrm{DE}}$ with it. This construction supports parallel per-cluster instances over heterogeneous, round-varying active sets with cluster-local privacy and recovery thresholds, while exposing only the cluster majority output to the EC tier.

  \item \textbf{Commitment-free EC-tier protocol.} We identify an EC-tier security surface that prior work does not jointly address: hiding each edge's weighted output from the cloud, which acts as the reconstructor, while recovering a consistent value from each edge against malicious intermediate edges under the non-linear signSGD majority vote. We construct the only commitment-free OTP/Shamir/MDS protocol, among the examined candidates, that meets both requirements, providing information-theoretic per-edge privacy under a cloud--edge non-collusion assumption and share-consistency recovery.
\end{itemize}

\subsection{Organization of the Paper}
Section~\ref{sec:related} introduces previous works related to our results. Section~\ref{sec:system} presents the system and threat models, and Section~\ref{sec:de_tier} specifies the DE tier through an ideal functionality and realizes it by single-mask power sharing. The EC-tier protocol is proposed in Section~\ref{sec:ec_protocol} together with its strawman justification and the fabrication taxonomy. Section~\ref{sec:composition} establishes compositional security. Section~\ref{sec:experiments} evaluates the proposed method on standard and IoT workloads. Finally, Section~\ref{sec:discussion} concludes the paper.

\section{Related Work}
\label{sec:related}

\paragraph*{The narrow signSGD-MV secure aggregation line}
Only one prior protocol targets signSGD-MV directly. Hi-SAFE~\cite{joo2024hisafe} uses Beaver triples for secure majority-vote evaluation under a flat semi-honest model without dropout tolerance or Byzantine robustness. The present paper develops single-mask power sharing with MDS-based recovery for solving this problem, together with the EC-tier protocol, the fabrication taxonomy, the tier-stratified composition framework, and the complete per-cluster DE-tier realization.

\paragraph*{Hierarchical additive secure aggregation}
A recent information-theoretic secure aggregation line studies hierarchical secure aggregation (HSA) over a user--relay--server network, enforcing relay security such that relays remain oblivious to user inputs and characterizing the minimum per-hop communication and the minimum correlated key randomness required for such security~\cite{zhang2024optimalhsa,zhang2025cyclic,xu2025hsacollusion,lu2026groupwise}. This line is additive-only and semi-honest, providing no share-consistency mechanism against malicious intermediaries. Also, a ``reconstruct-then-sign'' detour over additive SecAgg~\cite{bonawitz2017practical,bell2020secure} further reveals the exact vote tally, and is hence not a substitute for signSGD-MV.

\paragraph*{Other privacy-preserving FL}
DP-based FL~\cite{geyer2017differentially,mcmahan2018learning} bounds inference risk through a formal privacy budget, but the injected noise can degrade the sign vote (Remark~\ref{rem:inherent_leak}). HE-based FL~\cite{aono2017privacy,zhang2020batchcrypt} hides individual updates under encryption, but ciphertext expansion inflates device uploads by orders of magnitude, which is impractical for constrained IoT uplinks. Verifiable/robust FL~\cite{burkhalter2023rofl,guo2020verifl}, including verifiable masking-based HFL~\cite{vflgc2025chaining,jnca2025verifiable}, focuses on additive aggregation using commitments or zero-knowledge proofs, leaving the non-linear vote and EC-tier privacy unaddressed. TEE-based HFL~\cite{mo2021ppfl} requires a trusted hardware enclave at every edge.

\paragraph*{Low-latency secure polynomial evaluation}
Constant-round evaluation of a polynomial on a secret-shared input is a classical MPC problem with two relevant masking families. Multiplicative masking~\cite{barilan1989,damgard2006constant,lu2022polymath} requires only linear precomputation, but a multiplicatively masked zero opens to zero regardless of the mask, so the input must be guaranteed nonzero. The MV aggregate violates this requirement, since a sign vote among an even number of participants can tie, making the aggregate exactly zero. Additive masking with preshared mask powers, termed binomial tuples in~\cite{reisert2024} and used in passively secure form in~\cite{cho2018,couteau2019}, opens one masked value per variable and evaluates locally. The online mechanism of our single-mask power sharing coincides with the univariate case of this family, and we do not claim novelty for the online mechanism itself. However, this line operates over additive $n$-out-of-$n$ sharings, whose openings cannot tolerate even a single missing share. Our DE-tier contribution is instead to instantiate this approach with honest-majority Shamir sharing and a DN-based offline phase~\cite{damgard2007}, to integrate MDS decoding into both openings, to show that the MV polynomial is univariate in the aggregate and therefore additive masking remains correct at $x=0$, and to establish the per-cluster HFL reduction together with its dropout/Byzantine recovery analysis. We use single-mask power sharing as the name for this combined construction.

\paragraph*{MDS decoding and dealer binding in MPC}
Robust reconstruction via error correction is standard in honest-majority MPC~\cite{damgard2007,atlas2021}, where it ensures computation \emph{correctness}, while dealer \emph{binding} traditionally relies on commitment-based VSS. Our EC tier uses neither in its classical role. In our EC tier, each edge acts as the Shamir dealer of its own masked output, and we use dealer exclusively in this sense. We combine a lightweight random linear combination check for dealer well-formedness with MDS decoding repurposed for share-level recovery over OTP-masked values. This combination operates in an asymmetric setting, where multiple edge dealers face a single reconstructing cloud and the recovered values feed a weighted global aggregate. To our knowledge, it has not been considered in prior work.

Table~\ref{tab:related_merged} compares the FL lines discussed above in terms of security requirements. Dropout-resilient SecAgg provides input privacy and dropout tolerance but not Byzantine robustness. Byzantine-robust aggregation rules provide robustness but require access to individual updates or update statistics, conflicting with SecAgg privacy. Privacy-preserving robust FL bridges this privacy--robustness conflict through adversary-specific verification subprotocols layered on top of the aggregation pipeline and not tailored to hierarchical signSGD majority vote. To our knowledge, no existing method simultaneously supports constant-round secure majority-vote evaluation, joint DE-tier dropout/Byzantine recovery, and EC-tier per-edge privacy with share-consistency recovery. \textsc{TierSecAgg} fills this gap through a single tier-stratified architecture. 

\begin{table}[t]
\centering
\caption{Comparison of FL protocols by security requirements ($\checkmark$/$\triangle$/$\times$: yes/partial/no). Priv./Drop./Byz.: input privacy, dropout tolerance, Byzantine robustness; MV: constant-round secure majority vote; EC: EC-tier per-edge privacy and share-consistency recovery.}
\label{tab:related_merged}
\renewcommand{\arraystretch}{1.1}
\setlength{\tabcolsep}{3pt}
\footnotesize
\begin{tabular}{@{}L{3.9cm}C{0.75cm}C{0.75cm}C{0.75cm}C{0.7cm}C{0.7cm}@{}}
\toprule
\textbf{Method class} &
\textbf{Priv.} & \textbf{Drop.} & \textbf{Byz.} &
\textbf{MV} & \textbf{EC} \\
\midrule
Dropout-resilient additive SecAgg~\cite{bonawitz2017practical,bell2020secure,kadhe2020fastsecagg,so2022lightsecagg}
  & $\checkmark$ & $\checkmark$ & $\times$ & $\times$ & $\times$ \\
Byzantine-robust FL aggregation~\cite{blanchard2017machine,yin2018byzantine,pillutla2022robust,cao2021toward}
  & $\times$ & $\triangle$ & $\checkmark$ & $\times$ & $\times$ \\
Privacy-preserving robust FL~\cite{so2020byzantine,burkhalter2023rofl,jahani2023byzsecagg,mai2024rflpa,xia2024byitfl,fan2025byzsfl,guo2020verifl}
  & $\checkmark$ & $\triangle$ & $\checkmark^{\ddagger}$ & $\times$ & $\times$ \\
DP/HE-based FL~\cite{geyer2017differentially,mcmahan2018learning,aono2017privacy,zhang2020batchcrypt}
  & $\checkmark$ & $\triangle$ & $\times$ & $\triangle$ & $\triangle$ \\
Hierarchical additive SecAgg~\cite{xu2025hsacollusion, zhang2024optimalhsa,zhang2025cyclic, lu2026groupwise}
  & $\checkmark$ & $\triangle$ & $\times$ & $\times$ & $\triangle$ \\
Verifiable masking-based HFL~\cite{vflgc2025chaining,jnca2025verifiable}
  & $\checkmark$ & $\triangle$ & $\triangle^{\ddagger}$ & $\times$ & $\triangle$ \\
TEE-based HFL~\cite{mo2021ppfl}
  & $\checkmark$ & $\triangle$ & $\triangle$ & $\checkmark$ & $\triangle$ \\
Hi-SAFE~\cite{joo2024hisafe}
  & $\checkmark$ & $\times$ & $\times$ & $\checkmark$ & $\times$ \\
\textbf{\textsc{TierSecAgg}}
  & $\checkmark$ & $\checkmark$ & $\checkmark^{\dagger}$ & $\checkmark$ & $\checkmark$ \\
\bottomrule
\multicolumn{6}{@{}l}{\footnotesize
$^{\dagger}$Aggregation-layer recovery only.} \\
\multicolumn{6}{@{}l}{\footnotesize
$^{\ddagger}$Via adversary-specific verification subprotocols (ZKP,
VSS, commitments).}
\end{tabular}
\end{table}

\section{System Model and Threat Model}
\label{sec:system}

\subsection{Network and Aggregation}

The system consists of three components, namely devices, $Q$ edge
servers ($\mathcal{V}=\{1,\ldots,Q\}$), and a single cloud, organized
into two tiers, the DE tier and the EC tier
(Fig.~\ref{fig:system}).
Edge $q$ manages $M_q$ registered devices, of which $n_q^\tau\le M_q$ are
\emph{active} in round $\tau$ (active set $S_q^\tau$), abbreviated $n_q$ when the active set is stable across rounds, with
non-negative weight $D_q$.
Each active device $i$ holds a sign vector
$x_i(\tau)=\mathrm{sign}(g_i(\tau))\in\{-1,+1\}^{d}$ as its private
input, where $d$ is the model dimension and all protocol operations act
coordinate-wise.
The cluster majority is
\begin{equation} 
  \tilde{g}_q(\tau)=F_q\bigl(x_q(\tau)\bigr),
  \quad
  x_q(\tau)=\sum_{i\in S_q^\tau}x_i(\tau),
  \label{eq:cluster_majority}
\end{equation}
where $F_q(\cdot)=\mathrm{sign}(\cdot)$ is realized as a polynomial
over a prime field (Section~\ref{sec:de_concrete}).
The cloud computes
$Z(\tau)=\sum_q D_q\hat{g}_q(\tau)$ and
$\tilde{g}^{\mathrm{glob}}(\tau)=\mathrm{sign}(Z(\tau))$,
where $\hat{g}_q(\tau)$ is the value recovered at the EC tier (equal to
$\tilde{g}_q(\tau)$ for honest edges).

\begin{figure}[t]
\centering
\includegraphics[width=1.0\columnwidth]{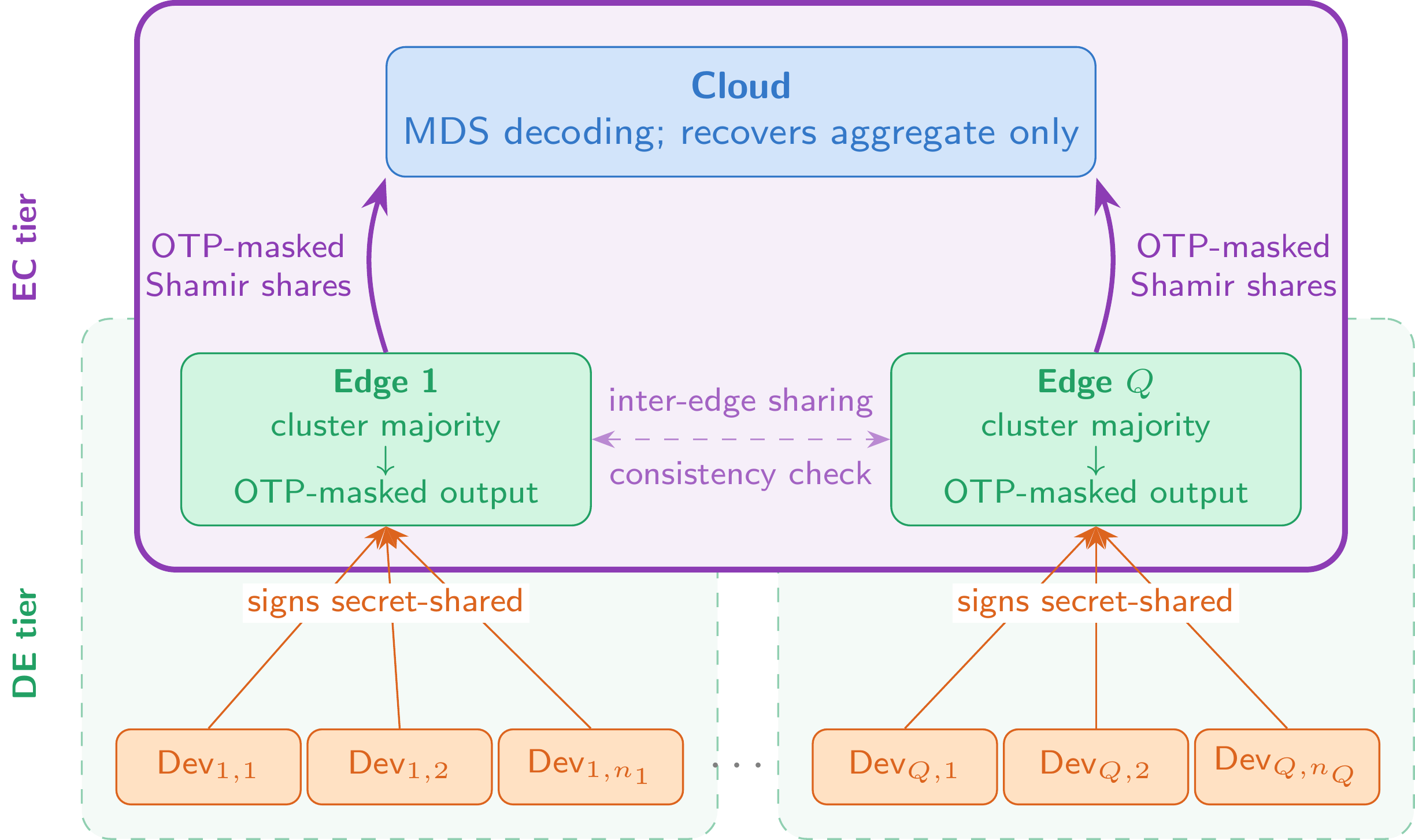}
\caption{\textsc{TierSecAgg} architecture: the DE tier evaluates the
cluster majority via secure majority-vote polynomial evaluation at each
edge, and the EC tier shares the OTP-masked $y_q$ to the cloud with an
inter-edge consistency check, recovered by MDS decoding.}
\label{fig:system}
\end{figure}

\subsection{Threat Model} 
\label{sec:threat}

We separate the adversarial model by role because the same physical edge plays two logically different roles.
At the DE tier the edge is the local reconstruction point of a
cluster-level secure majority protocol.
It is allowed to learn the prescribed output $\tilde{g}_q(\tau)$ but
should learn nothing further about individual device signs.
Since DE-tier edges are operated under service agreements, the realistic threat is passive observation, so the DE edge is modeled as \emph{semi-honest}, while Byzantine tolerance at the DE tier applies to \emph{device} uploads (up to $e$ corrupted device shares and $s$ dropouts).
Following the classical usage~\cite{lamport1982byzantine}, a Byzantine share is any share inconsistent with an honest execution regardless of intent.
In IoT deployments such shares arise not only from compromised devices but also from firmware bugs, memory faults, or desynchronized round state, and dropouts arise routinely from battery depletion, duty cycling, and intermittent connectivity. The MDS-based recovery of Section~\ref{sec:de_tier} is intent-agnostic, treating a faulty and a manipulated share identically.

At the EC tier the edge becomes an intermediate reporting party toward
the cloud.
Edges may be run by different organizations without a basis for mutual trust, so EC-tier edges are modeled as \emph{malicious} with a static corruption set of size at most $t_{\mathrm{EC}}$, capturing equivocation, inconsistent share forwarding, late repudiation, and semantic fabrication.
The malicious model is a worst-case envelope rather than an accusation of intent. It equally covers unintentional deviations with the same signatures, such as an inconsistent dealing left by a crash during Shamir distribution, forwarding corruption from a faulty aggregation stack, or a fabricated-looking report produced by a software regression, so every tolerance claim below holds whether a deviation is adversarial or accidental.
The cloud is semi-honest.
Cross-role deviation by the same physical edge is treated through the
fabrication taxonomy (Section~\ref{sec:fabrication}).
This role separation follows established practice rather than assuming
that the same operator changes nature across tiers.
Mixed-adversary MPC lets passive and active corruptions coexist under
separate thresholds~\cite{fitzi1998trading}, secure aggregation
routinely models the server as semi-honest and the clients as
malicious~\cite{bonawitz2017practical}, and the only element specific
to our setting is that the two models attach to the two protocol roles
of the same physical edge.
This attachment is sound because any active DE-tier deviation that
alters the reported cluster output is equivalent, at the DE/EC
interface, to semantic fabrication of $\hat{g}_q(\tau)$, which the
taxonomy bounds rather than detects.
The semi-honest DE-tier modeling therefore matters only for the
privacy of device inputs, and active attacks aimed at extracting them,
such as broadcasting a manipulated opening, fall outside the model, an
explicit limitation shared with the semi-honest server of flat SecAgg.

\paragraph*{Non-collusion between the cloud and edges}
The information-theoretic per-edge privacy guarantee assumes the cloud
learns none of the fresh pairwise OTP values that the edge pairs
establish per round (Section~\ref{sec:ec_construction}).
Collusion with $c$ corrupted edges exposes the involved OTPs and
degrades the guarantee to the residual honest subset.
The information-theoretic form further requires the per-round values
to be true fresh randomness established outside the cloud's
observation, for example from pre-distributed random tapes.
If they are instead expanded from shared seeds by a pseudorandom
generator, or exchanged over computationally secure channels whose
ciphertexts the cloud can observe, the same argument yields a
computational guarantee, mirroring the computational relaxation of
the DE tier's optional PRSS instantiation
(Remark~\ref{rem:prss}).
The non-collusion assumption itself is standard for pairwise
OTP-based SecAgg~\cite{bonawitz2017practical} and is most directly
justified in the cross-silo setting we target, where edge operators
and the cloud are administratively separated.

\paragraph*{Channels and setup}
DE-tier channels follow $\mathcal{F}_{\mathrm{DE}}$ below.
In the default explicit Shamir realization, authenticated
device-to-device channels are required only for the one-shot
input-sharing layer.
At the EC tier, edges communicate over stable authenticated channels
(e.g., TLS) with the cloud as the reconstruction point, and a private
authenticated pairwise channel per edge pair (or pre-distributed random
tapes) establishes fresh per-round OTP values outside the cloud's
observation.

\section{DE Tier: Ideal Functionality and Single-Mask Power-Sharing Realization}
\label{sec:de_tier}

\subsection{DE-Tier Ideal Functionality $\mathcal{F}_{\mathrm{DE}}$}
\label{sec:fde}

To decouple the EC-tier analysis from any specific DE-tier protocol, we
abstract the DE tier as an ideal functionality.

\begin{definition}[DE-tier ideal functionality $\mathcal{F}_{\mathrm{DE}}$]
\label{def:fde}
For each cluster $q$ and round $\tau$, $\mathcal{F}_{\mathrm{DE}}$ takes the device sign inputs $\{x_i(\tau)\}_{i\in S_q^\tau}$, returns the cluster majority output $\tilde{g}_q(\tau)=F_q(x_q(\tau))\in\{-1,+1\}$ to the edge, and satisfies the following properties:
\begin{itemize}[leftmargin=*,itemsep=1pt]
  \item \textbf{(P1) Constant-round online operation:} A constant number of lightweight field-element uploads per device per round, with no multi-round client--server interaction online.
  \item \textbf{(P2) $t_{\mathrm{DE}}$-privacy:} The view of the semi-honest edge, together with any semi-honest coalition of at most $t_{\mathrm{DE}}<n_q^\tau/2$ devices, is indistinguishable from a simulation given $\tilde{g}_q(\tau)$ and the coalition's own inputs and randomness. Hence, the indistinguishability is statistical for an information-theoretic realization and computational for a pseudorandom one (Remark~\ref{rem:prss}).
  \item \textbf{(P3) Joint dropout/Byzantine tolerance:} The edge recovers the correct $\tilde{g}_q(\tau)$ against any combination of $e$ Byzantine device shares and $s$ dropouts satisfying $2e+s\le d_{\min}^{\mathrm{DE}}-1$, where $d_{\min}^{\mathrm{DE}}$ is the minimum distance of the underlying MDS code. For a degree-$\delta$ opening among $n_q^\tau$ parties, $d_{\min}^{\mathrm{DE}}=n_q^\tau-\delta$.
\end{itemize}
\end{definition}

$\mathcal{F}_{\mathrm{DE}}$ is realized by the construction of
Section~\ref{sec:de_concrete}, and Hi-SAFE~\cite{joo2024hisafe} satisfies P1--P2 but not P3. The end-to-end argument depends only on $\mathcal{F}_{\mathrm{DE}}$ being realized by \emph{some} protocol satisfying Definition~\ref{def:fde}, so the two tiers can be improved and analyzed independently (Section~\ref{sec:composition}). Following the standard hybrid-model approach, we specify the DE tier abstractly and the EC tier concretely.

\subsection{A Concrete DE-Tier Instantiation via Single-Mask Power Sharing}
\label{sec:de_concrete}

Since the cluster aggregate is linear in the device inputs and the majority vote is a univariate polynomial evaluation of this aggregate value, one masked opening suffices for the entire secure evaluation. We realize $\mathcal{F}_{\mathrm{DE}}$ using single-mask power sharing built on this reduction, operating independently for each cluster over heterogeneous round-varying active sets and cluster-local thresholds. The key structural property is that the edge reconstructs \emph{only} $\tilde{g}_q(\tau)$: the device inputs stay secret-shared throughout, and the MV polynomial is evaluated using a single masked opening of the aggregate online.

\paragraph*{Per-cluster formulation}
Each cluster $q$ runs an independent instance over
$\mathbb{F}_{p_{\mathrm{DE}}}$ with $p_{\mathrm{DE}}>n_q^\tau$ for all
rounds.
Since every protocol operation acts coordinate-wise
(Section~\ref{sec:system}), we fix a single model coordinate
throughout this section and write $x_i^{(\tau)}\in\{-1,+1\}$ for
device $i$'s sign on that coordinate, so the per-coordinate form of
\eqref{eq:cluster_majority} becomes
\begin{equation}
  \tilde{g}_q^{(\tau)} = F_q(x_q^{(\tau)}),
  \qquad
  x_q^{(\tau)} = \sum_{i\in S_q^\tau} x_i^{(\tau)},
  \label{eq:de_target}
\end{equation}
where $F_q\colon\mathbb{F}_{p_{\mathrm{DE}}}\to\{-1,+1\}$ realizes
the majority rule exactly via Fermat's Little
Theorem~\cite{joo2024hisafe}: writing
$\mathcal{M}=\{-n_q^\tau,-n_q^\tau+2,\ldots,n_q^\tau\}$ for the
attainable sums,
\begin{equation}
F_q(x)=\!\!\!\sum_{m\in\mathcal{M}}\mathrm{sign}(m)
\bigl[1-(x-m)^{p_{\mathrm{DE}}-1}\bigr]\!\!\pmod{p_{\mathrm{DE}}},
\label{eq:mv_poly}
\end{equation}
where $\mathrm{sign}(0)$ is fixed by the tie-breaking policy for even
$n_q^\tau$.
Each term $1-(x-m)^{p_{\mathrm{DE}}-1}$ is the Fermat indicator of
$x=m$, so $F_q(\cdot)$ is an exact lookup representation of the
majority rule, and $p_{\mathrm{DE}}>n_q^\tau$ embeds all attainable
sums without modular ambiguity.
Expanding \eqref{eq:mv_poly} gives a polynomial of degree
$N\le p_{\mathrm{DE}}-1$, and hence the evaluation requires only the
powers of the single aggregate $x_q^{(\tau)}$ up to degree $N$.

\paragraph*{Offline preparation}
Per round, the cluster jointly precomputes a fresh
degree-$t_{\mathrm{DE}}$ mask sharing $[a]_{t_{\mathrm{DE}}}$, the
power sharings $\{[a^j]_{t_{\mathrm{DE}}}\}_{j=0}^{N}$ via sequential
DN multiplication ($N-1$ input-independent secure
multiplications), and the coefficient sharings
\begin{equation}
  [P_r]_{t_{\mathrm{DE}}}
  =
  \sum_{j=r}^{N} c_j\binom{j}{r}[a^{j-r}]_{t_{\mathrm{DE}}},
  \quad r=0,\ldots,N,
  \label{eq:de_offline}
\end{equation}
where $\{c_j\}$ are the coefficients of $F_q(\cdot)$ in \eqref{eq:mv_poly}.

\paragraph*{Input-sharing layer}
Each active device Shamir-shares its sign input at degree $t_{\mathrm{DE}}$, sending one evaluation to each peer, so that every device holds a share of $[x_i^{(\tau)}]_{t_{\mathrm{DE}}}$ for all $i\in S_q^\tau$ ($n_q^\tau-1$ field elements per device per coordinate). We treat this layer separately from the MV-evaluation layer because it is realization-dependent.
The explicit Shamir realization is the \emph{default}, and the statistical form of the privacy guarantee P2 refers to this realization. An \emph{optional} PRSS-based instantiation~\cite{cramer2005prss} instead derives the input and mask sharings locally from predistributed seeds, which removes the sharing traffic from the per-coordinate online path. The cost is that DE-tier privacy weakens from statistical to computational (Remark~\ref{rem:prss}). In both realizations, device-to-device communication is confined to this layer.
\begin{algorithm}[t]  
\caption{DE tier using single-mask power-sharing (cluster $q$, round
$\tau$, per coordinate)}
\label{alg:de_smps}
\begin{algorithmic}[1]
\Require $\mathbb{F}_{p_{\mathrm{DE}}}$ with
$p_{\mathrm{DE}}>n_q^\tau$, threshold
$t_{\mathrm{DE}}<n_q^\tau/2$, MV coefficients $\{c_j\}_{j=0}^{N}$ for $\sum_{j=0}^N c_j x^j$
\Statex \textit{Offline (input-independent)}
\State sample $[a]_{t_{\mathrm{DE}}}$ and compute
$\{[a^j]_{t_{\mathrm{DE}}}\}_{j=0}^{N}$ by $N-1$ sequential DN
multiplications
\State form $[P_r]_{t_{\mathrm{DE}}}$, $r=0,\ldots,N$, via
\eqref{eq:de_offline}
\Statex \textit{Online}
\State each active device $i$ Shamir-shares $x_i^{(\tau)}$ at degree
$t_{\mathrm{DE}}$
\State each device locally computes
$[x_q^{(\tau)}]\gets\sum_{i\in S_q^\tau}[x_i^{(\tau)}]$ and sends
its share of $[\hat{x}^{(\tau)}]=[x_q^{(\tau)}]-[a]$ to the edge
\State the edge reconstructs $\hat{x}^{(\tau)}$ by MDS decoding and broadcasts it
\State each device submits its share of
$[F_q(x_q^{(\tau)})]$ via \eqref{eq:de_eval}
\State the edge recovers $\tilde{g}_q(\tau)\in\{-1,+1\}$ by MDS decoding
\State \Return $\tilde{g}_q(\tau)$
\end{algorithmic}
\end{algorithm}

\paragraph*{Online phase (single masked opening)}
Given the input sharings above, each round proceeds in three steps.
 
\emph{Step 1 (local aggregation).}
Each device locally computes
$[x_q^{(\tau)}]_{t_{\mathrm{DE}}}
\gets\sum_{i\in S_q^\tau}[x_i^{(\tau)}]_{t_{\mathrm{DE}}}$,
a linear combination of local shares that requires no communication.
 
\emph{Step 2 (mask and open).}
Each device sends its share of
$[\hat{x}^{(\tau)}]_{t_{\mathrm{DE}}}
=[x_q^{(\tau)}]_{t_{\mathrm{DE}}}-[a]_{t_{\mathrm{DE}}}$
to the edge, which robustly reconstructs
$\hat{x}^{(\tau)}=x_q^{(\tau)}-a$ and broadcasts it.
This is the only opening in the online phase, and the revealed value
is statistically independent of $x_q^{(\tau)}$ because the mask $a$
is uniformly random and input-independent.
 
\emph{Step 3 (MV-polynomial evaluation).}
Each device locally forms the public powers
$(\hat{x}^{(\tau)})^0,\ldots,(\hat{x}^{(\tau)})^{N}$ and its share
of the output sharing
\begin{equation}
[F_q(x_q^{(\tau)})]_{t_{\mathrm{DE}}}
=\sum_{r=0}^{N}(\hat{x}^{(\tau)})^r\,[P_r]_{t_{\mathrm{DE}}},
\label{eq:de_eval}
\end{equation}
then submits that share to the edge.
No further interaction is needed, since the powers are public and the
$[P_r]$ are precomputed, and the edge recovers
$\tilde{g}_q(\tau)\in\{-1,+1\}$ by MDS decoding.
 
Every opening above is protected by the same coding structure: a
degree-$\delta$ Shamir sharing among $n$ parties is represented by a codeword of an
$[n,\delta+1,n-\delta]$ MDS code with minimum distance
$d_{\min}=n-\delta$, so the edge uniquely recovers the shared value
against any combination of $e$ Byzantine shares (errors) and $s$
dropouts (erasures) with $2e+s\le d_{\min}-1$, exactly the tolerance
form of P3.
The edge thus sees only $\hat{x}^{(\tau)}$ and $\tilde{g}_q(\tau)$,
never individual inputs or the unmasked aggregate, and the whole
evaluation uses a single masked opening within a constant-round
online phase, matching P1.
Algorithm~\ref{alg:de_smps} summarizes the complete procedure.

\paragraph*{Correctness}
Lemma~\ref{lem:binomial} and Theorem~\ref{thm:mv_correct} establish that Algorithm~\ref{alg:de_smps} generates the intended sharing.
 
\begin{lemma}[Binomial expansion]
\label{lem:binomial}
For any $x\in\mathbb{F}_{p_{\mathrm{DE}}}$ and mask
$a\in\mathbb{F}_{p_{\mathrm{DE}}}$ with $\hat{x}=x-a$,
\[
\sum_{r=0}^{k}\binom{k}{r}\hat{x}^r\,[a^{k-r}]_{t_{\mathrm{DE}}}
=[x^k]_{t_{\mathrm{DE}}}.
\]
\end{lemma}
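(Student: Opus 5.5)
The identity splits into a scalar identity plus the linearity of Shamir sharing. Shares $[a^{k-r}]_{t_{\mathrm{DE}}}$ are degree-$t_{\mathrm{DE}}$ evaluations of polynomials $f_{k-r}$ with $f_{k-r}(0)=a^{k-r}$. The coefficients $\binom{k}{r}\hat{x}^r$ are public field elements, since $\hat{x}$ has been opened. So the left-hand side is a public linear combination of degree-$t_{\mathrm{DE}}$ sharings, and it suffices to show two things: it is again a degree-$t_{\mathrm{DE}}$ sharing, and its secret equals $x^k$.

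\emph{Step 1 (scalar identity).} Since $x=\hat{x}+a$, the binomial theorem in the commutative ring $\mathbb{F}_{p_{\mathrm{DE}}}$ gives
\[
x^k=(\hat{x}+a)^k=\sum_{r=0}^{k}\binom{k}{r}\hat{x}^r a^{k-r}.
\]
This holds over any commutative ring. The binomial coefficients are interpreted as integers reduced mod $p_{\mathrm{DE}}$, so no condition relating $k$ to $p_{\mathrm{DE}}$ is needed. The edge cases need care. The $r=k$ term uses $[a^0]_{t_{\mathrm{DE}}}=[1]_{t_{\mathrm{DE}}}$, which is provided in the offline phase; any fixed public degree-$t_{\mathrm{DE}}$ sharing of $1$, e.g. the constant polynomial, works. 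The identity also holds at $\hat{x}=0$ under the convention $0^0=1$, which is exactly why additive masking tolerates $x=a$ (and $x=0$ generally).

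\emph{Step 2 (linearity lift).} Define $g(z)=\sum_{r}\binom{k}{r}\hat{x}^r f_{k-r}(z)$. Then $\deg g\le t_{\mathrm{DE}}$, and party $i$'s local value $g(\alpha_i)$ is precisely its left-hand-side share. Moreover $g(0)=\sum_r\binom{k}{r}\hat{x}^r a^{k-r}=x^k$ by Step 1, so $g$ is a valid degree-$t_{\mathrm{DE}}$ sharing of $x^k$. This is the claimed equality, understood as equality of the shared secret and of the degree bound.

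The main subtlety is the meaning of "$=$" between sharings. The right-hand side $[x^k]$ is not produced by any dealer, so the lemma should be read as saying the local combination \emph{is} a degree-$t_{\mathrm{DE}}$ sharing of $x^k$. I would state this explicitly. The statement does not require $g$ to be uniformly distributed among such sharings, and privacy is deferred to the later P2 argument, so Step 2 is all the lemma needs. Summing this identity with weights $c_k$ and exchanging the order of summation then yields \eqref{eq:de_offline}--\eqref{eq:de_eval}, which is presumably how Theorem~\ref{thm:mv_correct} will use it.
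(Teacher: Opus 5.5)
Your proposal is correct and follows the paper's own proof: the binomial theorem applied to $x=\hat{x}+a$, lifted to sharings by linearity of Shamir sharing with the public coefficients $\binom{k}{r}\hat{x}^r$. Your treatment of $[a^0]_{t_{\mathrm{DE}}}=[1]_{t_{\mathrm{DE}}}$ and your reading of the equality as ``the local combination is a degree-$t_{\mathrm{DE}}$ sharing of $x^k$'' only make explicit what the paper leaves implicit.
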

\begin{proof}
By the binomial theorem, $(\hat{x}+a)^k=\sum_{r=0}^{k}\binom{k}{r}\hat{x}^r a^{k-r}$. Since $\hat{x}+a=x$, the claim follows from the linearity of Shamir sharing.
\end{proof}
 
\begin{theorem}[MV-polynomial evaluation correctness]
\label{thm:mv_correct}
Algorithm~\ref{alg:de_smps} correctly generates $[F_q(x_q^{(\tau)})]_{t_{\mathrm{DE}}}$.
\end{theorem}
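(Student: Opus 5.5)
We show that the share each device submits in Step~3 is a valid degree-$t_{\mathrm{DE}}$ Shamir share of $F_q(x_q^{(\tau)})$, by combining Lemma~\ref{lem:binomial} with linearity of Shamir sharing and an interchange of summation order.

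\textbf{Step 1: the online and offline sharings are well formed.} After Step~1 of the online phase, $[x_q^{(\tau)}]_{t_{\mathrm{DE}}}=\sum_{i\in S_q^\tau}[x_i^{(\tau)}]_{t_{\mathrm{DE}}}$ is a valid degree-$t_{\mathrm{DE}}$ sharing of the aggregate, because the sum of degree-$t_{\mathrm{DE}}$ polynomials is again of degree at most $t_{\mathrm{DE}}$. Hence $[\hat{x}^{(\tau)}]=[x_q^{(\tau)}]-[a]$ shares $x_q^{(\tau)}-a$, and the opened value is exactly $\hat{x}^{(\tau)}=x_q^{(\tau)}-a$. On the offline side, DN multiplication returns degree-$t_{\mathrm{DE}}$ sharings, so each $[a^j]_{t_{\mathrm{DE}}}$ is a correct sharing of $a^j$. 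The precondition $t_{\mathrm{DE}}<n_q^\tau/2$ is what makes DN degree reduction valid here.

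\textbf{Step 2: substitute and reorder.} Insert \eqref{eq:de_offline} into \eqref{eq:de_eval}:
\[
\sum_{r=0}^{N}\hat{x}^r[P_r]=\sum_{r=0}^{N}\sum_{j=r}^{N}c_j\binom{j}{r}\hat{x}^r[a^{j-r}]
=\sum_{j=0}^{N}c_j\sum_{r=0}^{j}\binom{j}{r}\hat{x}^r[a^{j-r}].
\]
The interchange is over the triangle $0\le r\le j\le N$. Every operation here multiplies a sharing by a public scalar, either $c_j$, $\binom{j}{r}$, or $\hat{x}^r$, which is public after the broadcast. Such operations preserve the degree $t_{\mathrm{DE}}$.

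\textbf{Step 3: apply the lemma.} Lemma~\ref{lem:binomial} with $k=j$ rewrites the inner sum as $[x_q^{(\tau)j}]_{t_{\mathrm{DE}}}$. By linearity, the whole expression is then $[\sum_j c_j (x_q^{(\tau)})^j]_{t_{\mathrm{DE}}}=[F_q(x_q^{(\tau)})]_{t_{\mathrm{DE}}}$.

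To conclude, recall that \eqref{eq:mv_poly} equals $\mathrm{sign}(x_q^{(\tau)})$ on every attainable $x_q^{(\tau)}\in\mathcal{M}$. This holds because $p_{\mathrm{DE}}>n_q^\tau$, so the elements of $\mathcal{M}$ are distinct mod $p_{\mathrm{DE}}$ and the Fermat indicators are exact. Therefore MDS decoding recovers $\tilde{g}_q(\tau)$.

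No step involves a real obstacle. The one point needing care is justifying the $j=0$ and $r=j$ terms. This requires the conventions $[a^0]=[1]$, a public constant sharing, and $0^0=1$ for $\hat{x}^0$. The identity also holds at $x_q^{(\tau)}=0$, the tie case, because the mask is additive rather than multiplicative, so no division by the input occurs anywhere.
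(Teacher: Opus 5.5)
Your proof is correct and follows the paper's argument: you substitute \eqref{eq:de_offline}, interchange the double sum over $0\le r\le j\le N$, and apply Lemma~\ref{lem:binomial} with $k=j$ together with linearity, traversing the chain from $\sum_r(\hat{x}^{(\tau)})^r[P_r]_{t_{\mathrm{DE}}}$ back to $[F_q(x_q^{(\tau)})]_{t_{\mathrm{DE}}}$ rather than forward. Your extra checks (degree preservation under public-scalar operations, the conventions $[a^0]=[1]$ and $\hat{x}^0=1$, and validity at the tie $x_q^{(\tau)}=0$) are sound details the paper leaves implicit; the one point worth stating explicitly is that $\mathcal{M}$ being distinct modulo $p_{\mathrm{DE}}$ under only $p_{\mathrm{DE}}>n_q^\tau$ relies on all differences within $\mathcal{M}$ being even and on $p_{\mathrm{DE}}$ being odd.
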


\begin{proof}
By Lemma~\ref{lem:binomial} and Eq.~\eqref{eq:de_offline}, writing
$x=x_q^{(\tau)}$ and $\hat{x}=\hat{x}^{(\tau)}$ for brevity,
\begin{align*}
[F_q(x)]_{t_{\mathrm{DE}}}
&=\sum_{j=0}^{N}c_j\,[x^j]_{t_{\mathrm{DE}}}
 =\sum_{j=0}^{N}c_j\sum_{r=0}^{j}\binom{j}{r}
   \hat{x}^r[a^{j-r}]_{t_{\mathrm{DE}}}\\
&=\sum_{r=0}^{N}\hat{x}^r
   \sum_{j=r}^{N}c_j\binom{j}{r}[a^{j-r}]_{t_{\mathrm{DE}}}
 =\sum_{r=0}^{N}\hat{x}^r[P_r]_{t_{\mathrm{DE}}}.
\end{align*}
\end{proof}
The hierarchical protocol runs $Q$ such instances in parallel over
heterogeneous active sets with cluster-local thresholds, and the flat single-server case is recovered as the special case $Q=1$ on a fixed participant set.


\paragraph*{Cost}
The online MV-evaluation layer transmits exactly two field elements per device per coordinate, with a per-device cost independent of $n_q$, giving $O(Qn_q d\log p_{\mathrm{DE}})$ aggregate communication. The input-sharing layer adds $O(Qn_q^2 d\log p_{\mathrm{DE}})$ in the default instantiation, which the optional PRSS instantiation reduces to $O(Qn_q d\log p_{\mathrm{DE}})$.
Offline storage is $O(N)=O(p_{\mathrm{DE}})$ sharings per cluster, versus the exponential $O(2^{N})$ storage of a naive subset-mask alternative that pre-distributes masks for every possible active subset. The offline phase itself communicates: producing $\{[a^j]\}$ requires $N-1$ sequential DN multiplications per coordinate with a fresh mask per round. It gives $O(Qn_qNd\log p_{\mathrm{DE}})$ per-round offline traffic off the online critical path, as quantified in Appendix~\ref{app:additional_exp}.
On the online path, Beaver-triple evaluation of a degree-$N$ polynomial as in Hi-SAFE~\cite{joo2024hisafe} requires $O(N)$ online secure multiplications and openings per coordinate, whereas the construction opens only a single masked value.
The DE-tier threshold $t_{\mathrm{DE}}<n_q^\tau/2$ is governed by the cluster size and is independent of the EC-tier threshold $t_{\mathrm{EC}}<Q/4$ in Section~\ref{sec:ec_construction}. Choosing $t_{\mathrm{DE}}$ strictly below
$n_q^\tau/2$ enlarges the recovery region at the cost of reduced collusion privacy.

\begin{proposition}[DE-tier security]
\label{prop:de_security}
The per-cluster construction above realizes
$\mathcal{F}_{\mathrm{DE}}$ of Definition~\ref{def:fde} with:
\begin{enumerate}[label=(\roman*),leftmargin=*,itemsep=1pt]
  \item \textbf{(P1)} a single masked opening in a constant-round
        online phase, with no multi-round dispute resolution.
  \item \textbf{(P2)} $t_{\mathrm{DE}}$-privacy for the semi-honest
        edge together with any semi-honest coalition of at most
        $t_{\mathrm{DE}}<n_q^\tau/2$ devices, simulatable from
        $\tilde{g}_q(\tau)$ and the coalition's own inputs and
        randomness.
        The simulation is perfect under the default explicit Shamir
        input--sharing realization and computational (up to the PRG advantage)
        under the PRSS realization.
  \item \textbf{(P3)} recovery of $\tilde{g}_q(\tau)$ against $e$
        Byzantine device shares and $s$ dropouts under two operating
        modes: \emph{setup-completed mode}, in which offline preparation
        finishes on a stable set before the online round, with
        $2e+s\le n_q^\tau-t_{\mathrm{DE}}-1$, and
        \emph{conservative end-to-end mode}, in which degree-$2t_{\mathrm{DE}}$
        DN openings on the online critical path, with
        $2e+s\le n_q^\tau-2t_{\mathrm{DE}}-1$.
        Unless otherwise noted, we adopt the conservative bound
        throughout.
\end{enumerate}
\end{proposition}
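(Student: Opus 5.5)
The plan is to verify the three properties separately, taking correctness of the output sharing from Theorem~\ref{thm:mv_correct} and Lemma~\ref{lem:binomial}. For (P1) we count rounds directly from Algorithm~\ref{alg:de_smps}. After the input-sharing layer there are exactly two device-to-edge uploads per coordinate: the share of $[\hat{x}^{(\tau)}]$, and then the share of $[F_q(x_q^{(\tau)})]$. There is one edge broadcast in between. The local aggregation, the public powers $(\hat{x}^{(\tau)})^r$, and the combination \eqref{eq:de_eval} involve no interaction. Since MDS decoding is deterministic and non-interactive, no dispute-resolution rounds arise, and the round count does not depend on $n_q^\tau$ or $N$.

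For (P2) we build a simulator for the edge together with a coalition $C$ of size $|C|\le t_{\mathrm{DE}}$. The simulator first samples the coalition's shares of every honest device's input sharing and of the offline sharings $[a^j]$ and $[P_r]$. Because any $t_{\mathrm{DE}}$ shares of a degree-$t_{\mathrm{DE}}$ Shamir sharing are uniform and independent of the secret, these are uniform. The transcripts of the DN multiplications are simulated in the standard way: the coalition's view of the degree-$2t_{\mathrm{DE}}$ openings consists of values masked by uniform double-sharing randomness. Next the simulator samples $\hat{x}^{(\tau)}$ uniformly in $\mathbb{F}_{p_{\mathrm{DE}}}$. This is identically distributed to the real value because $a$ is uniform and independent of $x_q^{(\tau)}$ and of everything the coalition sees. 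Here the key fact is that the coalition's $\le t_{\mathrm{DE}}$ shares of $[a]$, and of every $[a^j]$, are jointly independent of $a$; we argue this by noting that every derived sharing is a fresh re-randomized DN output. Finally, the simulator has to produce the full share vectors that the edge receives in the two openings. It interpolates degree-$t_{\mathrm{DE}}$ polynomials through the opened values, $\hat{x}^{(\tau)}$ and $\tilde{g}_q(\tau)$, and the coalition's already fixed shares. These consistent sharings are uniformly random subject to those constraints, which matches the real distribution because the honest parties' remaining share randomness is fresh. The result is perfect simulation. For PRSS we replace the locally derived sharings with truly random ones through a hybrid step and lose the PRG advantage.

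For (P3) we apply the coding fact stated just before the proposition: a degree-$\delta$ sharing among $n_q^\tau$ parties is a codeword with $d_{\min}=n_q^\tau-\delta$, so $2e+s\le n_q^\tau-\delta-1$ allows unique decoding. In setup-completed mode the offline sharings are finished on a stable set, and both online openings have degree $t_{\mathrm{DE}}$. This holds because \eqref{eq:de_eval} is a public linear combination of degree-$t_{\mathrm{DE}}$ sharings. The resulting bound is $n_q^\tau-t_{\mathrm{DE}}-1$. Errors and erasures in the first opening are corrected before the broadcast, so the second opening uses the correct $\hat{x}^{(\tau)}$. In conservative mode the binding openings are the degree-$2t_{\mathrm{DE}}$ openings inside DN, which give $n_q^\tau-2t_{\mathrm{DE}}-1$. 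This bound is smaller and so dominates.

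The main obstacle is the rigorous part of (P2): showing that the coalition's view of the DN-generated power sharings $[a^j]$ together with the opened $\hat{x}^{(\tau)}$ leaks nothing about $a$. The difficulty is that $a^j$ are correlated secrets. I would handle this by invoking the standard DN simulation theorem~\cite{damgard2007} as a black box in a hybrid where the powers are generated by an ideal dealer, and then apply the one-time-pad argument to $\hat{x}^{(\tau)}$.
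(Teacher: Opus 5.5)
Your proposal is correct and follows the paper's proof essentially step for step. (P1) is by inspection of the two uploads and one broadcast. (P2) uses a simulator that outputs the coalition's state, uniform evaluations of honest sharings, a uniform masked opening, and a random degree-$t_{\mathrm{DE}}$ polynomial interpolating $\tilde{g}_q(\tau)$ and the coalition's shares, plus a PRG hybrid for PRSS. (P3) uses the $[n_q^\tau,\delta+1,n_q^\tau-\delta]$ MDS bound with $\delta=t_{\mathrm{DE}}$ for the online openings and $\delta=2t_{\mathrm{DE}}$ for the DN openings.

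Your explicit simulation of the DN transcripts, and your handling of the correlated powers $[a^j]$ through an ideal-dealer hybrid, tighten the paper's direct appeal to Shamir perfect secrecy rather than constituting a different route.
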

\begin{proof}
See Appendix~\ref{app:de_proof}.
\end{proof}

\begin{remark}[Optional PRSS instantiation]
\label{rem:prss}
A PRSS instantiation~\cite{cramer2005prss} pre-distributes, once per
stable session, a seed to every maximal unqualified subset
($O(\binom{n_q^\tau}{t_{\mathrm{DE}}})$ seeds per cluster), after which
degree-$t_{\mathrm{DE}}$ random sharings are derived without
communication and each input (and the mask $a$) is provided by a single
broadcast.
This removes the $O(Qn_q^2 d\log p_{\mathrm{DE}})$ term but makes every
shared value a PRG output, so P2 holds only computationally.
It is well suited to the small- or constant-$t_{\mathrm{DE}}$ regime evaluated
here, and P3 and the EC-tier analysis remain unchanged.
\end{remark}

\section{EC Tier: Commitment-Free Protocol and Fabrication Taxonomy}
\label{sec:ec_protocol}

\subsection{EC-Tier Goals}

The EC-tier protocol must achieve, simultaneously and without
cryptographic commitments:
\begin{itemize}[leftmargin=*,itemsep=1pt]
  \item \textbf{(G1) Per-edge output privacy.} The cloud's view reveals
        nothing about individual $\hat{g}_q(\tau)$ beyond what is
        implied by the released aggregate $Z(\tau)$.
  \item \textbf{(G2) Share-consistency recovery.} For up to
        $t_{\mathrm{EC}}$ malicious edges, the cloud either recovers a
        uniquely defined value per edge or detects inconsistency and
        aborts.
  \item \textbf{(G3) Compact constant-round online operation.} A
        constant number of online sub-rounds per training round with no
        multi-round dispute resolution, at overall EC-tier communication cost
        $O(Q^2 d\log p)$.
\end{itemize}

\subsection{Cryptographic Setup}

\paragraph*{Pairwise OTP masks}
For each round $\tau$ and edge pair $(q,q')$ with $q<q'$, a fresh value
$r_{qq'}^{(\tau)}\xleftarrow{\$}\mathbb{F}_p$ is shared privately
between the pair, and each edge derives
\begin{equation}
  \mu_q^{(\tau)} = \sum_{q'>q} r_{qq'}^{(\tau)}
                   - \sum_{q'<q} r_{q'q}^{(\tau)} \pmod{p},
  \label{eq:otp_mask}
\end{equation}
so that $\sum_q \mu_q^{(\tau)}=0$ and each $\mu_q^{(\tau)}$ is uniform
over $\mathbb{F}_p$.
Since $Z(\tau)\in[-\sum_q D_q,+\sum_q D_q]$, we choose a prime
$p>2\sum_q D_q$ and recover the centered integer from
$Z(\tau)\bmod p$ in the standard way.

\paragraph*{MDS structure and thresholds}
Writing $t$ for $t_{\mathrm{EC}}$ throughout this section, a
degree-$t$ Shamir sharing among $Q$ edges is a codeword of a
$[Q,t+1,Q-t]$ MDS code, so unique decoding handles $e$ errors and $s$
erasures whenever $2e+s\le Q-t-1$.
Correcting the $t$ malicious edges alone requires only $t<Q/3$, but we
adopt the stricter basic threshold $t<Q/4$ so that the cloud tolerates
up to $t$ malicious forwarding errors \emph{and} up to $t$ additional
honest dropouts in the same round.
The threshold relaxes to $t<Q/3$ with the offline complaint
of~\cite{damgard2007}.
The basic threshold yields $t_{\mathrm{EC}}=0$ for $Q\le 4$ and
$t_{\mathrm{EC}}=1$ for $5\le Q\le 7$, while the complaint variant
gives $t_{\mathrm{EC}}=1$ already at $Q=4$, so very small deployments
retain per-edge output privacy (which requires only $Q\ge 2$) and
detection-with-abort, while malicious-edge correction becomes
meaningful from $Q\ge 5$.
This is intrinsic to recovery from a $[Q,t+1,Q-t]$ code rather than a
defect of the design, and Table~\ref{tab:comm_thresh} quantifies the
tolerance per evaluated topology.
Note that $t_{\mathrm{EC}}<Q/4$ is a share-consistency recovery
threshold, not a bound on global learning correctness.
Honest majority is intrinsic to signSGD-MV itself, since a malicious
majority renders the vote meaningless regardless of the aggregation
layer, and whether the weighted global vote flips is governed by the
margin condition~\eqref{eq:f4_bound} rather than by the edge count.

\subsection{Design Justification via Strawman Analysis}
\label{sec:strawman}

The EC-tier contribution is a problem-and-design-space result over
standard tools (OTP masking, Shamir sharing, and MDS decoding) rather
than a new primitive.
No flat secure-aggregation construction can be ported to this tier,
since a flat protocol treats its reconstructor as an honest output
sink and never faces the problem of hiding each edge's output
\emph{from} the reconstructor while recovering its consistency against
a misbehaving dealer.
We show that, among the natural combinations of pairwise OTP masking
and Shamir sharing considered below, every candidate fails at least one
of G1--G3, while the combination used here meets all three.
We do not claim exhaustiveness over the entire design space, but the
analysis shows the design is \emph{forced} within this space rather
than merely chosen.
Per-share information-theoretic authentication is not a substitute
within this space, since a MAC generated by the dealer binds each
share in transit but certifies nothing about cross-recipient
degree-$t$ consistency, which is exactly the dealer-layer problem
here.

\paragraph*{Candidate~A (pairwise OTP masking only)}
Each edge uploads a single masked scalar $y_q$ to the cloud, a direct
EC-tier analogue of~\cite{bonawitz2017practical}.
G1 and G3 hold, but \emph{G2 fails}: nothing binds the single
submitted value to a self-consistent report.

\paragraph*{Candidate~B (Shamir VSS on the unmasked output)}
Edge $q$ Shamir-shares $\hat{g}_q(\tau)$ directly.
G2 holds, but \emph{G1 fails} because the cloud reconstructs
$\hat{g}_q(\tau)$, and classical VSS additionally requires the
commitments we avoid.

\paragraph*{Candidate~C (masked upload then VSS audit)}
A masked upload in round 1 is followed by a Shamir audit of the same
value in round 2.
G1 and a form of G2 hold, but \emph{G3 fails} due to the extra online
round and the dispute-resolution mechanism for mismatches.

\paragraph*{Candidate~D (share-then-mask)}
Edge $q$ Shamir-shares $\hat{g}_q(\tau)$ and adds the mask outside the
share domain.
\emph{(D1) Cloud-side masking} requires the cloud to know
$\mu_q^{(\tau)}$, breaking G1.
\emph{(D2) Share-domain masking by edges} is, for an honest edge,
mathematically identical to Shamir-sharing
$y_q^{(\tau)}=D_q\hat{g}_q(\tau)+\mu_q^{(\tau)}$ directly by linearity,
i.e., our design.
This equivalence is an honest-execution statement only.
A malicious edge may add the mask to evaluations not lying on a single
degree-$t$ polynomial, but this is exactly Case~1 equivocation
(Section~\ref{sec:fabrication}), detected by the consistency check
except with soundness error $\le 2^{-\kappa}$, where $\kappa$ denotes the statistical security parameter, and corrected by MDS
decoding within the radius.
Hence D2 contains no viable design distinct from ours that satisfies G1.

\paragraph*{The surviving design}
The only design in this space satisfying G1--G3 without commitments is:
(i)~form $y_q^{(\tau)}=D_q\hat{g}_q(\tau)+\mu_q^{(\tau)}$ privately
inside edge $q$,
(ii)~Shamir-share $y_q^{(\tau)}$ \emph{itself} as the secret, and
(iii)~apply MDS decoding at the cloud to recover $y_q^{(\tau)}$
directly, never reconstructing $\hat{g}_q(\tau)$.
Privacy and recovery are overlaid on a \emph{single} object, the
OTP-masked output as the Shamir secret, rather than as two separate
steps, since separating them breaks one of G1--G3.
This combination, in the asymmetric topology of $Q$
edges-as-MPC-parties sharing to a single cloud reconstructor with
weighted aggregation, has not, to our knowledge, been studied in the
HFL signSGD-MV setting.
Table~\ref{tab:strawman_ec} summarizes.

\begin{table}[t]
\centering
\caption{EC-tier strawman comparison.}
\label{tab:strawman_ec}
\renewcommand{\arraystretch}{1.05}
\setlength{\tabcolsep}{2pt}
\scriptsize
\resizebox{\columnwidth}{!}{%
\begin{tabular}{@{}L{2.6cm}C{0.45cm}C{0.45cm}C{0.45cm}L{2.3cm}@{}}
\toprule
\textbf{Design} & \textbf{G1} & \textbf{G2} & \textbf{G3} & \textbf{Failure mode} \\
\midrule
A: OTP mask only & $\checkmark$ & $\times$ & $\checkmark$ & No binding to $y_q$ \\
B: Shamir on $\hat{g}_q$ & $\times$ & $\checkmark$ & $\checkmark$ & Cloud sees output \\
C: Mask + VSS audit & $\checkmark$ & $\checkmark$ & $\times$ & Extra rounds \\
D1: Cloud-side mask & $\times$ & $\checkmark$ & $\checkmark$ & Cloud needs mask \\
D2: Share-then-mask & \multicolumn{3}{c}{$\equiv$ ours (honest)} & Malicious deviation \\
\textbf{Ours: share $y_q$} & $\checkmark$ & $\checkmark$ & $\checkmark$ & None \\
\bottomrule
\end{tabular}%
}
\end{table}

\subsection{Protocol Description}
\label{sec:ec_construction}

We assume $Q\ge 2$, since OTP masking provides no per-edge privacy when
$Q=1$.
Algorithm~\ref{alg:ec_ext} states the protocol.

\begin{algorithm}[t]
\caption{EC-Tier Protocol with Consistency Check and MDS Recovery}
\label{alg:ec_ext}
\begin{algorithmic}[1]
\Require Edge outputs $\{\hat{g}_q(\tau)\}_{q=1}^{Q}$ from $\mathcal{F}_{\mathrm{DE}}$, weights $\{D_q\}$, threshold $t<Q/4$, prime $\mathbb{F}_p$, challenge space $\mathbb{G}\supseteq\mathbb{F}_p$ (a field extension), $|\mathbb{G}|\ge Q\cdot 2^\kappa$
\Ensure $Z(\tau)$ and $\tilde{g}^{\mathrm{glob}}(\tau)$

\Statex \textit{Masked Shamir distribution}
\ForAll{$q\in\{1,\ldots,Q\}$}
  \State Compute $\mu_q^{(\tau)}$ via \eqref{eq:otp_mask}
  \State $y_q^{(\tau)}\gets D_q\hat{g}_q(\tau)+\mu_q^{(\tau)}\pmod p$
  \State Sample degree-$t$ polynomial $f_q^{(\tau)}$, $f_q^{(\tau)}(0)=y_q^{(\tau)}$
  \State Send $f_q^{(\tau)}(v)$ to each edge $v\in\{1,\ldots,Q\}$
\EndFor

\Statex \textit{Online consistency check}
\State Cloud samples and publishes $\chi\leftarrow\mathbb{G}$ after all distributions complete
\ForAll{$v\in\{1,\ldots,Q\}$}
  \State $C_v\gets\sum_{q=1}^{Q}\chi^q f_q^{(\tau)}(v)$ and broadcast $C_v$
\EndFor
\If{$\{(v,C_v)\}$ is not a degree-$t$ codeword}
  \State \Return \textsc{Abort}
\EndIf

\Statex \textit{Cloud recovery}
\ForAll{$v$} \State Forward $\{f_q^{(\tau)}(v)\}_{q=1}^Q$ to the cloud \EndFor
\ForAll{$q$}
  \State $\bar y_q^{(\tau)}\gets\mathrm{Decode}_{\mathrm{MDS}}\bigl(\{(v,f_q^{(\tau)}(v))\},t\bigr)$
\EndFor
\State $Z(\tau)\gets\sum_q \bar y_q^{(\tau)}\pmod p$ with centered recovery
\State \Return $Z(\tau)$ and $\tilde{g}^{\mathrm{glob}}(\tau)=\mathrm{sign}(Z(\tau))$
\end{algorithmic}
\end{algorithm}

\paragraph*{Challenge generation}
The public challenge $\chi$ is sampled and published by the
semi-honest cloud only after every edge has completed its Shamir
distribution, so all distributed shares are fixed before $\chi$
becomes known and the soundness argument of
Appendix~\ref{app:ec_proof} applies against rushing edges.
Because the cloud is semi-honest and non-colluding, $\chi$ is honestly
uniform, and no inter-edge coin-tossing or commitment-based setup is
needed, preserving the commitment-free property.
Alternatively, $\chi$ can be derived by hashing the round transcript
in the random-oracle model, removing even this single cloud message.

\paragraph*{Choice of the challenge space and coordinate batching}
The protocol field $\mathbb{F}_p$ is fixed by the aggregation range
($p>2\sum_q D_q$, e.g., $p=61$ here), so sampling $\chi$ from
$\mathbb{F}_p$ itself would give soundness error only $Q/p$, which is
useless for small $p$.
The challenge is therefore drawn from an extension field
$\mathbb{G}=\mathbb{F}_{p^m}\supseteq\mathbb{F}_p$, in which the
shares embed and every polynomial identity over $\mathbb{F}_p$
remains valid, so the soundness error $Q/|\mathbb{G}|$ is driven to
$2^{-\kappa}$ by increasing $m$ alone, decoupled from the share size
$\log_2 p$.
For the $d$-dimensional update, the per-coordinate combinations are
further batched into a single check by assigning consecutive powers
of $\chi$ across (dealer, coordinate) pairs, so each edge broadcasts
one element of $\mathbb{G}$ per round.
The combined error polynomial then has degree at most $Qd$, and
choosing $|\mathbb{G}|\ge Qd\cdot 2^{\kappa}$, a single element of
roughly $\kappa+\log_2(Qd)$ bits, preserves soundness $2^{-\kappa}$
(Appendix~\ref{app:ec_proof}).

\paragraph*{Detection vs.\ correction}
The two mechanisms address disjoint layers.
\emph{(a) Dealer-layer well-formedness.}
If a malicious dealer distributes evaluations lying on no degree-$t$
polynomial, the challenge $C_v=\sum_q\chi^q f_q^{(\tau)}(v)$ inherits
the malformedness as a polynomial-in-$\chi$ relation and the joint
degree-$t$ test fails except with soundness error
$\le Q/|\mathbb{G}|\le 2^{-\kappa}$
(Appendix~\ref{app:ec_proof}), causing \textsc{Abort}.
An edge broadcasting an incorrect $C_v$ falls into the same case.
\emph{(b) Forwarding-layer MDS correction.}
If all dealers are well-formed but malicious edges corrupt shares
during forwarding (or shares are lost), MDS decoding corrects any
combination with $2e+s\le Q-t-1$ deterministically.
Recovery in this paper therefore means recovery from forwarding-layer
errors after accepted well-formed sharings, whereas malformed dealer
behavior is handled by detection and abort. In both cases the accept-or-abort decision is made and announced by
the semi-honest cloud, which collects the $C_v$ values, so no
broadcast channel or Byzantine agreement among edges is required and
the constant-round operation of G3 is preserved.

\paragraph*{Dealer identification and liveness}
The joint test certifies the existence of a malformed sharing without
identifying its dealer, so the basic abort policy would let a
persistent equivocator stall training, as the $m=3$ rows of
Table~\ref{tab:attack_eval_main} illustrate.
A natural fallback preserves liveness at $Q$ times the check cost and
runs only upon failure: the cloud applies per-dealer MDS decoding to
each forwarded sharing individually.
Under $t<Q/4$, an honest dealer's sharing always lies within the
unique-decoding radius even after $t$ forwarding errors and $t$
dropouts, so a per-dealer decoding failure certifies misbehavior by
that dealer, whose report is then excluded from $Z(\tau)$ as a dropout
and whose edge is flagged for removal in subsequent rounds.
Malicious forwarders cannot frame an honest dealer within the
threshold, and a malformed sharing that lies within the radius of some
codeword decodes to a unique well-defined value, which is exactly the
accepted forwarding-error behavior.
Persistent equivocation is thereby converted from repeated aborts into
one-time identification and exclusion.
The computational cost of the fallback equals the per-dealer MDS decoding already benchmarked as the fault path in Table~\ref{tab:compute_bench}, incurred only in rounds where the joint test fails and only once for a persistent equivocator. The experiments in Section~\ref{sec:experiments} report the basic policy without this fallback.

\subsection{Security Guarantees}

\begin{definition}[Per-edge output privacy beyond the aggregate]
\label{def:ec_privacy}
The EC-tier protocol achieves per-edge output privacy beyond the
aggregate if for any two output vectors
$\hat{\mathbf{g}},\hat{\mathbf{g}}'\in\{-1,+1\}^Q$ with
$\sum_q D_q\hat{g}_q=\sum_q D_q\hat{g}'_q$, the cloud's view under
$\hat{\mathbf{g}}$ is perfectly indistinguishable from its view under
$\hat{\mathbf{g}}'$.
Equivalently, the cloud's posterior on $\hat{\mathbf{g}}$ given its
full view equals its posterior given only the aggregate $Z$.
When up to $t$ edges are corrupted, the definition applies with the
quantification restricted to the honest edges' outputs and to equal
honest weighted sums.
\end{definition}

\begin{definition}[Share-consistency recovery]
\label{def:ec_binding}
For any strategy of at most $t<Q/4$ malicious edges with
$|\mathbb{G}|\ge Q\cdot 2^\kappa$ ($Qd\cdot 2^{\kappa}$ under
coordinate batching):
(i)~the consistency check detects, except with soundness error
$\le 2^{-\kappa}$, any sharing not lying on a degree-$t$ polynomial
(\textsc{Abort}), and
(ii)~when the check accepts, MDS decoding uniquely recovers
$y_q^{(\tau)}=f_q^{(\tau)}(0)$ within $2e+s\le Q-t-1$.
\end{definition}

\begin{remark}[Inherent aggregate leakage]
\label{rem:inherent_leak}
The aggregate $Z(\tau)$ is the intended output and inevitably reveals
some information about $\{\hat{g}_q(\tau)\}$ (e.g., $Q=2$,
$D_1=D_2=1$, $Z=+2$ implies $\hat{g}_1=\hat{g}_2=+1$).
This leakage is shared by all signSGD-MV baselines, and for small-$Q$
deployments output perturbation (e.g., DP noise at the cloud) composes
on top of \textsc{TierSecAgg} without affecting its internal analysis.
\end{remark}

\begin{lemma}[Joint distribution of pairwise OTP masks]
\label{lem:joint_otp}
Let $\{r_{qq'}^{(\tau)}\}_{q<q'}$ be independent and uniform over
$\mathbb{F}_p$ and unknown to the cloud, and define
$\mu_q^{(\tau)}$ as in~\eqref{eq:otp_mask}.
Then the joint distribution of
$\boldsymbol{\mu}^{(\tau)}=(\mu_1^{(\tau)},\ldots,\mu_Q^{(\tau)})$
from the cloud's perspective is uniform over the zero-sum hyperplane
$\mathcal{H}_0=\{\boldsymbol{\mu}\in\mathbb{F}_p^Q:\sum_q\mu_q=0\}$.
\end{lemma}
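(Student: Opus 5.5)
The plan is to view $\boldsymbol{\mu}^{(\tau)}$ as the image of the uniform random vector $\mathbf{r}=(r_{qq'}^{(\tau)})_{q<q'}\in\mathbb{F}_p^{\binom{Q}{2}}$ under a fixed $\mathbb{F}_p$-linear map. The uniform distribution is then pushed forward onto the image of that map, and it suffices to show that this image is exactly $\mathcal{H}_0$. Concretely, define $L:\mathbb{F}_p^{\binom{Q}{2}}\to\mathbb{F}_p^Q$ by \eqref{eq:otp_mask}. Then $L$ is the signed incidence matrix of the complete graph $K_Q$ (edge $\{q,q'\}$ with $q<q'$ contributes $+1$ at $q$ and $-1$ at $q'$), and $\boldsymbol{\mu}^{(\tau)}=L\mathbf{r}$. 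Since the cloud knows nothing about $\mathbf{r}$, its prior on $\mathbf{r}$ is the uniform one, so the whole claim reduces to a statement about $L$.

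The first step is to use the general fact that if $\mathbf{r}$ is uniform on a finite vector space $V$ and $L$ is linear, then $L\mathbf{r}$ is uniform on $\mathrm{im}(L)$. The proof is that every fiber $L^{-1}(\mathbf{m})$ with $\mathbf{m}\in\mathrm{im}(L)$ is a coset of $\ker L$, so all fibers have the same size $|\ker L|$. The second step is to show $\mathrm{im}(L)\subseteq\mathcal{H}_0$. Each column of $L$ has one $+1$ and one $-1$, so summing \eqref{eq:otp_mask} over $q$ makes every $r_{qq'}$ cancel; this is the zero-sum property already noted after \eqref{eq:otp_mask}. The third step is to show $\mathcal{H}_0\subseteq\mathrm{im}(L)$. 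I would do this constructively using only the star edges $\{q,Q\}$ for $q<Q$. Given $\boldsymbol{\mu}\in\mathcal{H}_0$, set $r_{qQ}=\mu_q$ for $q<Q$ and all other $r_{qq'}=0$. Then \eqref{eq:otp_mask} yields $\mu_q$ for each $q<Q$, and at $q=Q$ it yields $-\sum_{q<Q}\mu_q=\mu_Q$ by the zero-sum condition. Hence $\mathrm{im}(L)=\mathcal{H}_0$, which has dimension $Q-1$, and the lemma follows.

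The only point needing care is the meaning of ``from the cloud's perspective.'' The statement should be read as the cloud's conditional distribution given everything it observes that is independent of $\mathbf{r}$, namely the public parameters and $\chi$. This independence is exactly what the non-collusion and fresh-randomness hypotheses of the threat model guarantee. I would state this explicitly, so that the lemma is purely about the prior on $\mathbf{r}$, and leave the conditioning on the cloud's other messages to the later privacy theorem. Otherwise the argument is routine linear algebra. As a consistency check, each fiber has size $p^{\binom{Q}{2}-(Q-1)}$, and each marginal $\mu_q$ is uniform, since for $Q\ge2$ every coordinate projection of $\mathcal{H}_0$ onto $\mathbb{F}_p$ is surjective.
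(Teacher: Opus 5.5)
Your proof is correct and follows essentially the same route as the paper's sketch: the map $\mathbf{r}\mapsto\boldsymbol{\mu}$ is $\mathbb{F}_p$-linear and onto $\mathcal{H}_0$, and a uniform distribution pushes forward to the uniform distribution on the image. You also supply the two details the paper delegates to a citation, namely the coset-counting argument for the pushforward and the explicit star-edge preimage $r_{qQ}=\mu_q$ showing surjectivity.
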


\begin{proof}[Proof sketch]
The map $\{r_{qq'}\}\mapsto\boldsymbol{\mu}$ is
$\mathbb{F}_p$-linear and surjective onto $\mathcal{H}_0$ (cf.\ the
analysis in~\cite{bonawitz2017practical}), and the image of a product
uniform distribution under a surjective linear map onto a subspace is
uniform on that subspace.
\end{proof}

\begin{proposition}[EC-tier security]
\label{prop:ec_security}
Under $t<Q/4$, the consistency check with
$|\mathbb{G}|\ge Q\cdot 2^\kappa$ ($Qd\cdot 2^{\kappa}$ under
coordinate batching), and the cloud--edge non-collusion assumption, for any rushing adversary statically corrupting at most $t$
edges, the EC-tier protocol simultaneously and independently achieves
(i)~share-consistency recovery (Definition~\ref{def:ec_binding}) and
(ii)~information-theoretic per-edge output privacy
(Definition~\ref{def:ec_privacy}, restricted to the honest edges'
outputs under corruption).
\end{proposition}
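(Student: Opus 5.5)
The proof splits into two independent parts: recovery (i) and privacy (ii). Each part relies on a different source of randomness: the challenge $\chi$ for recovery, and the pairwise OTPs $r_{qq'}^{(\tau)}$ for privacy. That separation justifies the word ``independently'' in the statement.

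\emph{Part (i): soundness of the consistency check.} Fix the shares distributed by all dealers. The cloud samples $\chi$ only after distribution completes, so the shares are independent of $\chi$ even against a rushing adversary. For each dealer $q$, decompose the received vector $(f_q(v))_{v}$ as the nearest degree-$t$ codeword plus a residual $e_q$. Equivalently, project onto the parity-check space of the $[Q,t+1,Q-t]$ Reed--Solomon code via a parity-check matrix $H$. The vector $(C_v)_v$ is a degree-$t$ codeword iff $H\cdot\sum_q\chi^q f_q=\sum_q \chi^q (Hf_q)=0$. If some dealer's sharing is malformed, some $Hf_q\neq 0$. Then some coordinate of this vector is a nonzero polynomial in $\chi$ of degree at most $Q$ (or $Qd$ under batching). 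By Schwartz--Zippel over $\mathbb{G}$, it vanishes with probability at most $Q/|\mathbb{G}|\le 2^{-\kappa}$. A corrupt $C_v$ broadcast is handled the same way: it is equivalent to a modified $v$-th coordinate, which again yields a nonzero polynomial unless the broadcast is consistent. I expect this to need care, since the adversary chooses its $C_v$ after seeing $\chi$. I would argue that the honest parties' $C_v$ already pin down the codeword: there are at least $Q-t\ge t+1$ honest points. I would then treat deviating broadcasts as errors of a codeword determined by honest shares. If the check accepts, every sharing is a degree-$t$ polynomial $f_q$ with $f_q(0)=y_q$ well defined. Forwarding corruptions and drops then produce a received word within $2e+s\le Q-t-1$ of that codeword, so unique RS decoding returns $f_q(0)$. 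Taking $t<Q/4$ with $e,s\le t$ satisfies the bound, since $3t\le Q-t-1$.

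\emph{Part (ii): privacy.} I would build a simulator for the cloud's view. The view consists of $\chi$, the values $C_v$, and all forwarded shares; by Definition~\ref{def:ec_privacy} it must be computable from $Z$ alone. Honest execution lets the cloud reconstruct exactly $(y_1,\dots,y_Q)$. The $C_v$ and the forwarded shares are functions of the $y_q$ and of fresh sharing randomness that is independent of the outputs. It therefore suffices to show that the distribution of $(y_q)_q$ depends only on $Z$. By Lemma~\ref{lem:joint_otp}, $\boldsymbol{\mu}$ is uniform on $\mathcal{H}_0$. Hence $\mathbf{y}=D\hat{\mathbf{g}}+\boldsymbol{\mu}$ is uniform on the affine hyperplane $\{\mathbf{y}:\sum_q y_q=Z\}$. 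This distribution is identical for any $\hat{\mathbf{g}},\hat{\mathbf{g}}'$ with the same weighted sum, which gives perfect indistinguishability. The simulator samples such a $\mathbf{y}$, deals honest degree-$t$ sharings of it, and computes the $C_v$.

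Under corruption without collusion, the corrupted edges still behave arbitrarily. Their shares and masks are either known to the simulator or are adversarial inputs. I would restrict to the honest sub-vector: the OTPs among honest pairs remain unknown to the cloud. Applying Lemma~\ref{lem:joint_otp} to the honest sub-network makes the honest $\mu$'s uniform subject only to a fixed sum. Honest $y$'s then depend only on the honest weighted sum. The main obstacle is the detection argument against adversarial $C_v$ chosen after $\chi$. I would first try to absorb such broadcasts as errors relative to the honest-determined codeword. If that fails, I would require all $C_v$ to be exactly consistent, which aborts on any deviation; this is what the protocol does.
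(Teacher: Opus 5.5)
Your privacy part (ii) is essentially the paper's argument: saying that $\mathbf{y}$ is uniform on the affine hyperplane $\sum_q y_q=Z$ is the same fact as the paper's shift bijection on $\mathcal{H}_0$. Under corruption, though, you should state explicitly, as the paper does, that by Shamir secrecy the corrupted edges' messages are independent of the honest outputs, since those messages may depend on their $t$ evaluations of each honest dealing and on $\chi$. The genuine gap is in part (i). Your Schwartz--Zippel step uses the full parity-check matrix. The identity $H\boldsymbol{C}=\sum_q\chi^q H\boldsymbol{f}_q$ presupposes $C_v=\sum_q\chi^q f_q(v)$ for every $v$. A rushing adversary, however, picks $C_v$ for corrupted $v$ after seeing $\chi$, so the coefficients are not fixed before $\chi$ is drawn, and ``a corrupt broadcast again yields a nonzero polynomial'' is not a valid step. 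Worse, at corrupted coordinates the share $f_q(v)$ has no operational meaning, so $H\boldsymbol{f}_q\neq\boldsymbol{0}$ is the wrong notion of malformedness. A corrupt dealer that is off-polynomial only at corrupted recipients is never caught, because those recipients simply broadcast values on the interpolant of the honest $C_v$. Your conclusion ``if the check accepts, every sharing is a degree-$t$ polynomial'' is therefore false as stated. You flag the difficulty yourself, but your repairs do not close it. First, $Q-t\ge t+1$ honest points impose no constraint, since any $t+1$ values interpolate a degree-$t$ polynomial. Second, deviating broadcasts are not ``errors'': the test is an exact codeword test, not a decoder. Third, exact consistency does not ``abort on any deviation'', because a deviation chosen on the honest interpolant passes.

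The missing idea is to run the soundness argument on honest coordinates only. Let $\mathcal{S}$ be the honest set, with $|\mathcal{S}|\ge Q-t\ge t+2$; it is $t+2$, not $t+1$, that gives the test teeth. Acceptance implies $\boldsymbol{C}|_{\mathcal{S}}$ lies in the code punctured to $\mathcal{S}$. This is an MDS code whose parity-check matrix $H_{\mathcal{S}}$ has $|\mathcal{S}|-t-1\ge 1$ rows over $\mathbb{F}_p$, still valid over $\mathbb{G}$. Every $C_v$ with $v\in\mathcal{S}$ is computed honestly from shares fixed before $\chi$. Hence $H_{\mathcal{S}}\boldsymbol{C}|_{\mathcal{S}}=\sum_q\chi^q\boldsymbol{\sigma}_q$ with fixed $\boldsymbol{\sigma}_q=H_{\mathcal{S}}\boldsymbol{f}_q|_{\mathcal{S}}$. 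If some dealer is \emph{effectively malformed} ($\boldsymbol{\sigma}_q\neq\boldsymbol{0}$), acceptance has probability at most $Q/|\mathbb{G}|$. A dealing that deviates only at corrupted coordinates determines a unique degree-$t$ polynomial, and hence $y_q$, through its honest coordinates. Its deviations, together with forwarding deviations by corrupted edges, then count among the $e\le t$ errors relative to that codeword. Your count $2t+s\le Q-t-1$ under $t<Q/4$ then finishes the recovery claim. This redefinition of ``malformed'' and of $f_q$ is exactly how the paper makes Definition~\ref{def:ec_binding} meaningful against malicious dealers.
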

\begin{proof}
See Appendix~\ref{app:ec_proof}.
\end{proof}

\subsection{Fabrication Taxonomy and Partial Defense}
\label{sec:fabrication}

Malicious edges can deviate in four operationally distinct ways, of
which we defend three and explicitly bound the fourth.
Edge $q$ is the deviating party, honest edges follow the protocol, and the
cloud is honest-but-curious.

\paragraph*{Case~1: equivocation in Shamir distribution}
Edge $q$ sends $\{f_q^{(\tau)}(v)\}$ lying on no degree-$t$ polynomial.
\textit{Detected} by the consistency check except with soundness
error $\le 2^{-\kappa}$, causing abort
(Proposition~\ref{prop:ec_security}(i)), while forwarding-layer
inconsistency in well-formed sharings is corrected by MDS decoding
within $2e+s\le Q-t-1$.

\paragraph*{Case~2: channel-split inconsistency}
Edge $q$ tries to commit to different values on cloud-facing and
inter-edge channels.
\textit{Defended structurally}: in Algorithm~\ref{alg:ec_ext} the cloud
has no channel from edge $q$ outside the inter-edge Shamir
distribution, so no surface exists.

\paragraph*{Case~3: late repudiation}
Edge $q$ later denies having sent the recovered $y_q^{(\tau)}$.
Each edge forwards its received share over an authenticated channel,
and under $t<Q/4$ any $t+1$ of the $Q-t>3t$ honest forwards uniquely
determine $y_q^{(\tau)}$.
This is forensic reproducibility rather than cryptographic
non-repudiation, which would require signatures or commitments.

\paragraph*{Case~4: semantic fabrication}
Edge $q$ honestly Shamir-shares a well-formed but fabricated
$y_q'=D_q\hat{g}_q^{(\mathrm{fab})}+\mu_q^{(\tau)}$.
\textit{Not detected and not reduced at the EC tier}, since the sharing
is well-formed and OTP masking hides DE-tier inputs from the cloud.
The case subsumes active DE-tier deviation by the same physical edge
that alters its own reported cluster output, as noted in
Section~\ref{sec:threat}.
Under $t<Q/4$, each malicious edge contributes weighted error at most
$2D_q$, giving the threshold-induced bound
\begin{equation}
  \bigl|Z^{\mathrm{rec}}(\tau)-Z^{\mathrm{honest}}(\tau)\bigr|
  \le 2\sum_{q\in\mathcal{T}}D_q
  \le 2tD_{\max},
  \label{eq:f4_bound}
\end{equation}
with $\mathcal{T}$ the malicious set and $D_{\max}=\max_q D_q$.
The global majority flips only when
$|Z^{\mathrm{honest}}(\tau)|<2tD_{\max}$, i.e., near a tie.
Detecting semantic fabrication requires an output-validity mechanism
(e.g., zero-knowledge proofs over the vote computation or device-level
attestation) compatible with DE-tier privacy, a substantial extension
left to future work.

\section{Compositional Security}
\label{sec:composition}

The two tiers interact only through the single scalar
$\tilde{g}_q(\tau)\in\{-1,+1\}$ per cluster, an interface that is
one-directional and information-minimal, so the tiers can be analyzed
independently and composed.

\begin{theorem}[Compositional security of \textsc{TierSecAgg}]
\label{thm:composition}
Assume the DE tier instantiates any protocol realizing
$\mathcal{F}_{\mathrm{DE}}$ with $t_{\mathrm{DE}}<n_q^\tau/2$ and
recovery condition $2e+s\le d_{\min}^{\mathrm{DE}}-1$, and the EC-tier
consistency check accepts.
Under $t_{\mathrm{EC}}<Q/4$ (or $t_{\mathrm{EC}}<Q/3$ with offline
complaint~\cite{damgard2007}), the full protocol simultaneously
achieves:
\begin{enumerate}[label=(\roman*),leftmargin=*,itemsep=1pt]
  \item \textbf{DE-tier $t_{\mathrm{DE}}$-privacy}: per cluster, the
        view of any $t_{\mathrm{DE}}$-coalition is simulatable given
        $\tilde{g}_q(\tau)$ and the coalition's own inputs and
        randomness, statistically under an information-theoretic DE
        realization and computationally under PRSS
        (Remark~\ref{rem:prss}).
  \item \textbf{EC-tier per-edge output privacy beyond the aggregate}:
        for any $\hat{\mathbf{g}},\hat{\mathbf{g}}'$ inducing the same
        released aggregate, the cloud's EC-tier views are perfectly
        indistinguishable, restricted to the honest edges' outputs
        under corruption (Proposition~\ref{prop:ec_security}(ii)).
  \item \textbf{EC-tier share-consistency recovery}: conditioned on
        acceptance (which fails to detect a malformed sharing only with
        probability $\le 2^{-\kappa}$), MDS decoding uniquely recovers
        every $y_q^{(\tau)}$ against $e$ forwarding errors and $s$
        dropouts with $2e+s\le Q-t_{\mathrm{EC}}-1$
        (Proposition~\ref{prop:ec_security}(i)).
\end{enumerate}
The full simulator composes $\mathcal{S}_{\mathrm{DE}}$ and
$\mathcal{S}_{\mathrm{EC}}$ sequentially.
\end{theorem}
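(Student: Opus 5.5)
The argument is a hybrid one: each tier's guarantee is proved in isolation and then transferred to the composed protocol. The justification for treating the tiers separately is the interface. The DE and EC tiers share only the scalar $\tilde{g}_q(\tau)$ per cluster, and it flows one way, from DE to EC. No DE-tier message depends on EC-tier randomness (the OTP values $r_{qq'}^{(\tau)}$, the Shamir polynomials $f_q^{(\tau)}$, or the challenge $\chi$). Conversely, the EC tier consumes DE-tier material only through $\tilde{g}_q(\tau)$, and it does so only via $\hat{g}_q(\tau)$. I would therefore first replace every DE-tier instance with $\mathcal{F}_{\mathrm{DE}}$. In the resulting $\mathcal{F}_{\mathrm{DE}}$-hybrid world, the EC tier sees exactly the inputs assumed in Proposition~\ref{prop:ec_security}. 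For the concrete instantiation, the replacement itself is justified by Proposition~\ref{prop:de_security}.

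\emph{Part (i).} For each cluster, I would take $\mathcal{S}_{\mathrm{DE}}$ from P2 of Definition~\ref{def:fde}. This simulator is assumed for any realizing protocol and is provided for the concrete one by Proposition~\ref{prop:de_security}(ii). The step that needs care is ruling out the possibility that the additional EC-tier messages seen by a coalition (the edge plus up to $t_{\mathrm{DE}}$ devices) reveal more about device inputs. Every EC-tier message visible to that edge is a function of three things: $\tilde{g}_q(\tau)$, independent EC randomness, and other edges' messages. Those other messages in turn depend only on their own clusters' outputs. So the joint view factors as (DE view) $\times$ (a randomized function of $\{\tilde{g}_{q'}\}$ and independent coins). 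Given $\tilde{g}_q(\tau)$, the simulator can sample the second factor from the other outputs, which in a per-cluster statement are treated as part of the environment. Simulation quality is inherited directly: statistical for an information-theoretic DE realization, and computational (up to the PRG advantage) under PRSS per Remark~\ref{rem:prss}.

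\emph{Parts (ii) and (iii).} In the hybrid world these are restatements of Proposition~\ref{prop:ec_security}(ii) and (i). For (ii), the cloud's view consists of the forwarded shares, the $C_v$ values and $\chi$. By the MDS/Shamir structure, the shares determine $y_q^{(\tau)}$ and otherwise carry independent randomness. By Lemma~\ref{lem:joint_otp}, the vector $(y_q)_q=(D_q\hat{g}_q+\mu_q)_q$ is uniform on the affine hyperplane $\{\sum_q y_q=Z\}$. Two output vectors with the same $Z$ therefore induce identical view distributions. When edges are corrupted, I would fold the corrupted edges' masks into the adversary's knowledge and apply the lemma to the honest subset, which gives the restricted statement. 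The cloud never sees any DE-tier transcript, so nothing from the DE tier adds to its view. For (iii), I would condition on acceptance. Except with probability $\le 2^{-\kappa}$ (the soundness bound over $\mathbb{G}$), all sharings are degree-$t$ codewords, and unique decoding of the $[Q,t+1,Q-t]$ code then gives recovery whenever $2e+s\le Q-t_{\mathrm{EC}}-1$. The relaxation to $t<Q/3$ is inherited from the offline complaint of~\cite{damgard2007}.

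\emph{Assembling the simulator.} $\mathcal{S}$ first runs the per-cluster $\mathcal{S}_{\mathrm{DE}}$ on the outputs $\tilde{g}_q(\tau)$. It then feeds these outputs to $\mathcal{S}_{\mathrm{EC}}$, which needs only $Z$ for the cloud and the corrupted edges' own values. Indistinguishability follows by a two-step hybrid: swap DE real for ideal, then EC real for simulated. The errors add: zero or negligible from the DE step, plus $2^{-\kappa}$ from the EC step. The main obstacle is the role separation. A single physical edge is semi-honest in the DE tier and malicious in the EC tier. I would argue, as in Section~\ref{sec:threat}, that any output-altering DE deviation is observationally equivalent at the interface to Case~4 fabrication. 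Such a deviation changes only which value enters $y_q$, and (iii) is stated for the shared $y_q^{(\tau)}=f_q^{(\tau)}(0)$ rather than for the honest $\tilde{g}_q$, so it is unaffected. Privacy for the DE tier is claimed only against semi-honest DE behaviour, so the hybrid argument goes through.
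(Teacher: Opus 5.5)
Your proposal is correct and follows essentially the paper's route. Each item is inherited from the corresponding tier-level result (P2 of Definition~\ref{def:fde}, realized by Proposition~\ref{prop:de_security}(ii), and Proposition~\ref{prop:ec_security}(ii) and (i)), and the one-directional scalar interface with no DE-tier randomness in the EC transcript lets $\mathcal{S}_{\mathrm{DE}}$ and $\mathcal{S}_{\mathrm{EC}}$ compose sequentially; your extra treatment of the EC-tier messages in the DE coalition's view and of the role split via Case~4 only makes explicit what the paper leaves to Section~\ref{sec:threat} and the sketch of Corollary~\ref{cor:e2e}.
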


\begin{corollary}[End-to-end device-input privacy]
\label{cor:e2e}
Under Theorem~\ref{thm:composition}, the cloud's view reveals no
information about honest devices' signs $\{x_i(\tau)\}$ or honest
edges' unmasked outputs $\{\hat{g}_q(\tau)\}$ beyond what is implied
by the released aggregate $Z(\tau)$.
\end{corollary}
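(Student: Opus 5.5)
The corollary follows by a hybrid argument over the cloud's view. That view contains only EC-tier messages: the public challenge $\chi$, the broadcast combinations $\{C_v\}$, the forwarded shares $\{f_q^{(\tau)}(v)\}$, and the released $Z(\tau)$. By the threat model and Algorithm~\ref{alg:de_smps}, the cloud takes no part in the DE tier and receives no DE-tier message. The information-minimal interface of Section~\ref{sec:composition} guarantees that the EC tier consumes only $\hat{g}_q(\tau)$ from each cluster. So the cloud's view is a randomized function $\mathrm{View}_{\mathrm{cloud}}=\Phi\bigl(\hat{\mathbf{g}}(\tau),\chi,\{r_{qq'}\},\{f_q\},\text{corrupted-edge randomness}\bigr)$, and the device inputs $\{x_i(\tau)\}$ enter only through $\hat{\mathbf{g}}$. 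This is the key structural step: it gives the Markov chain $\{x_i\}\to\hat{\mathbf{g}}\to\mathrm{View}_{\mathrm{cloud}}$ for honest clusters. For clusters whose edge is corrupted, $\hat{g}_q$ may be fabricated (Case~4), but it is still a function of the edge's own DE-tier view. Since that edge is semi-honest at the DE tier, Proposition~\ref{prop:de_security}(ii), equivalently Theorem~\ref{thm:composition}(i), makes its DE-tier view simulatable from $\tilde{g}_q(\tau)$ alone. Hence even colluding corrupted edges pass the cloud nothing about honest devices' signs beyond $\tilde{g}_q(\tau)$.

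Next I apply Theorem~\ref{thm:composition}(ii), which rests on Proposition~\ref{prop:ec_security}(ii) and Lemma~\ref{lem:joint_otp}. For any two honest-output vectors $\hat{\mathbf{g}},\hat{\mathbf{g}}'$ with equal honest weighted sum, the cloud's views are identically distributed. I would make this explicit with a simulator that takes only $Z(\tau)$ and the corrupted edges' values. It samples $(y_q)_{q\text{ honest}}$ uniformly on the affine hyperplane fixed by the honest part of $Z$, which Lemma~\ref{lem:joint_otp} shows is the true distribution of the masked outputs. It then samples fresh degree-$t$ Shamir polynomials through these $y_q$ and computes $C_v$ from the public $\chi$. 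Because forwarded shares and the $C_v$ are deterministic functions of the $f_q$ and $\chi$, this reproduces the real view exactly.

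Finally I chain the two facts. The posterior on $(\{x_i\},\hat{\mathbf{g}})$ given the cloud's view equals the posterior given $\hat{\mathbf{g}}$'s sufficient statistic for the view, namely $Z(\tau)$. By the Markov chain, the posterior on $\{x_i\}$ given the view equals $\mathbb{E}[\Pr(\{x_i\}\mid\hat{\mathbf{g}})\mid Z]$, which is a function of $Z$ alone. The sequential composition $\mathcal{S}_{\mathrm{DE}}\circ\mathcal{S}_{\mathrm{EC}}$ of Theorem~\ref{thm:composition} formalizes this as a simulator from $Z(\tau)$ alone. The guarantee is perfect under the default realization and computational under PRSS (Remark~\ref{rem:prss}), or when the OTPs are PRG-derived. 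It holds conditioned on acceptance, with the $2^{-\kappa}$ failure event affecting only correctness, not privacy.

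The main obstacle is the corruption case: showing the Markov chain survives when corrupted edges collude with the cloud's knowledge while the cloud remains non-colluding. Here I would rely on non-collusion so that the corrupted edges' OTPs stay hidden from the cloud, and on the restriction of Definition~\ref{def:ec_privacy} to honest edges. The claim is then read as covering honest devices in honest clusters, plus honest devices in corrupted clusters only up to $\tilde{g}_q(\tau)$.
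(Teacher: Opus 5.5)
Your argument is correct and is essentially the paper's own proof sketch: P2 makes the DE-tier side simulatable from $\tilde{g}_q(\tau)$, Proposition~\ref{prop:ec_security}(ii) makes the cloud's EC-tier view invariant across same-aggregate output vectors, and because the EC-tier transcript carries no DE-tier randomness the two compose sequentially. Your Markov-chain and posterior computation, your observation that P2 is only needed for clusters whose edge is corrupted (the cloud never sees DE-tier messages), and your caveat that honest devices in corrupted clusters are protected only up to $\tilde{g}_q(\tau)$ make explicit what the paper's three-line sketch leaves implicit.
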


\begin{proof}[Proof sketch]
By P2 of $\mathcal{F}_{\mathrm{DE}}$ the DE-tier transcript is
simulatable from $\tilde{g}_q(\tau)$, by
Proposition~\ref{prop:ec_security}(ii) the EC-tier view is invariant
across same-aggregate output vectors, and the EC-tier transcript
contains no DE-tier randomness, so the two simulators compose
sequentially.
\end{proof}

\begin{corollary}[Concrete end-to-end guarantee]
\label{cor:concrete_e2e}
With the DE tier of Section~\ref{sec:de_concrete} and the EC tier of
Algorithm~\ref{alg:ec_ext}, under $t_{\mathrm{DE}}<n_q^\tau/2$ and
$t_{\mathrm{EC}}<Q/4$, \textsc{TierSecAgg} achieves
(a)~end-to-end device-input privacy beyond $Z(\tau)$,
(b)~DE-tier recovery with
$2e_{\mathrm{DE}}+s_{\mathrm{DE}}\le n_q^\tau-2t_{\mathrm{DE}}-1$
(conservative bound) and EC-tier recovery with
$2e_{\mathrm{EC}}+s_{\mathrm{EC}}\le Q-t_{\mathrm{EC}}-1$, and
(c)~per-round communication
$O(Qn_q d\log p_{\mathrm{DE}}+Q^2 d\log p)$ plus the
$O(Qn_q^2 d\log p_{\mathrm{DE}})$ explicit input-sharing term, which
drops to $O(Qn_q d\log p_{\mathrm{DE}})$ under PRSS at computational
DE-tier privacy.
\end{corollary}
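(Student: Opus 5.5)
The proof is an instantiation of Theorem~\ref{thm:composition}, with the concrete parameters read off Proposition~\ref{prop:de_security} and Proposition~\ref{prop:ec_security}. We take the three claims in turn.

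\textbf{Part (a).} First we check that the hypotheses of Theorem~\ref{thm:composition} hold for the concrete DE tier. By Proposition~\ref{prop:de_security}, the single-mask power-sharing construction realizes $\mathcal{F}_{\mathrm{DE}}$ under $t_{\mathrm{DE}}<n_q^\tau/2$. Its (P3) bound has the form $2e+s\le d_{\min}^{\mathrm{DE}}-1$ with $d_{\min}^{\mathrm{DE}}=n_q^\tau-2t_{\mathrm{DE}}$, since the conservative mode opens degree-$2t_{\mathrm{DE}}$ DN sharings, giving an $[n,2t+1,n-2t]$ code. Corollary~\ref{cor:e2e} then applies directly and yields (a). The only extra point is that (a) holds for the concrete realization in both the default (perfect) and PRSS (computational) forms. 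This follows because P2 is stated for each form and the EC-tier simulator is perfect, so the composed guarantee inherits the weaker of the two.

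\textbf{Part (b).} This part is a bookkeeping step. The DE-tier recovery claim is item (iii) of Proposition~\ref{prop:de_security} in conservative mode. We should note that the setup-completed mode would give the larger region $n_q^\tau-t_{\mathrm{DE}}-1$, but the statement commits to the conservative bound. The EC-tier claim is Proposition~\ref{prop:ec_security}(i) combined with Definition~\ref{def:ec_binding}(ii), applied to the $[Q,t+1,Q-t]$ code.

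\textbf{Part (c).} We sum the per-round costs stated earlier.
\begin{itemize}
\item The DE online MV layer sends two field elements per device per coordinate, for $O(Qn_q d\log p_{\mathrm{DE}})$ in total.
\item In the EC tier, each of the $Q$ edges sends $Q$ shares per coordinate to peers and forwards $Q$ shares to the cloud. This gives $O(Q^2 d\log p)$.
\item The batched consistency check adds one $\mathbb{G}$-element per edge, of size $\kappa+\log_2(Qd)$ bits. This is $O(Q(\kappa+\log Qd))$, which is dominated by the previous term for fixed $\kappa$.
\item The explicit input-sharing term is $O(Qn_q^2d\log p_{\mathrm{DE}})$, which becomes $O(Qn_qd\log p_{\mathrm{DE}})$ under PRSS by Remark~\ref{rem:prss}.
\end{itemize}
Offline DN traffic is excluded, since it lies off the online path.

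\textbf{Main obstacle.} The delicate point is the interface in (a), namely confirming that no DE-tier value reaches the cloud. The cloud sees only the EC transcript and the challenge, and the EC transcript depends on DE outputs only through $\tilde g_q$. This is where the one-directional interface is used: the sequential composition of $\mathcal{S}_{\mathrm{DE}}$ and $\mathcal{S}_{\mathrm{EC}}$ then needs only the aggregate $Z(\tau)$.

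A second subtlety is that the privacy parts do not require the consistency check to accept. The assumption that the check accepts is needed only for the recovery claim in (b).
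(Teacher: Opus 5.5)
Your proposal is correct and follows the same route the paper intends: the corollary is a direct instantiation of Theorem~\ref{thm:composition} (via Corollary~\ref{cor:e2e} for (a)), of Proposition~\ref{prop:de_security}(iii) in conservative mode and Proposition~\ref{prop:ec_security}(i) for (b), and of the cost accounting of Section~\ref{sec:de_concrete}, Remark~\ref{rem:prss}, and the roughly $(2Q^2-Q)$ EC share count for (c). One small point of care is that you justify dropping the offline DN traffic because it lies off the online path, yet Proposition~\ref{prop:de_security}(iii) describes the conservative mode you adopt in (b) as the one in which those DN openings sit on the online critical path, so (c) should be read, as in the paper, as the online count that excludes the separately reported $O(Qn_qNd\log p_{\mathrm{DE}})$ preparation traffic.
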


\begin{remark}[Modularity]
\label{rem:modular}
The thresholds $t_{\mathrm{DE}}<n_q^\tau/2$ and $t_{\mathrm{EC}}<Q/4$
are independent, and replacing the DE-tier subroutine with any other
realization of $\mathcal{F}_{\mathrm{DE}}$ does not affect the EC-tier
analysis.
\end{remark}

\section{Experimental Evaluation}
\label{sec:experiments}

Under honest execution, plaintext signSGD-MV-HFL and
\textsc{TierSecAgg} reconstruct the same global sign update by
construction, so their utility results are reported jointly as
``signSGD-MV/\textsc{TierSecAgg}.''
Cryptographic claims (Propositions~\ref{prop:de_security}
and~\ref{prop:ec_security}, Theorem~\ref{thm:composition}) follow from
the reduction proofs and are not the target of empirical validation.
Communication and threshold trade-offs are characterized analytically
in Appendix~\ref{app:additional_exp}, together with computation-time
benchmarks of all cryptographic operations.

\subsection{Setup}
We use MNIST and FMNIST as standard benchmarks and UCI HAR (raw
$9\times128$ inertial signals, six classes) as an IoT sensor workload,
with 30 clients in total.
The MNIST/FMNIST model is a lightweight 2D CNN with two $5\times5$
convolutional layers (32 and 64 channels, each followed by $2\times2$
max pooling) and two fully connected layers of 128 and 10 units
($d=454{,}922$ parameters), and the UCI HAR model is a lightweight 1D
CNN with three convolutional layers (32, 64, and 128 channels with
kernel sizes 5, 5, and 3), global average pooling, and a linear
classifier ($d=37{,}254$ parameters).
MNIST/FMNIST use Dirichlet non-IID partitions with $\alpha=0.5$ and UCI
HAR uses the subject-level natural partition.
Five topologies
$(Q,n_q)\in\{(6,5),(5,6),(10,3),(3,10),(1,30)\}$ are evaluated with edge weights $D_q=n_q$ and ties resolved by
$\mathrm{sign}(0)=+1$.
Baselines are FedAvg (dense gradients, 5 local SGD steps) and plaintext
signSGD-MV-HFL, trained for 200 global rounds with 1 DE-tier iteration
unless noted. All reported values are the mean and standard deviation over
three random seeds.

\subsection{Utility Preservation and Topology Sensitivity}
\label{sec:exp_utility}
Table~\ref{tab:topology_signsgd_detail} reports final test accuracy,
with $6{\times}5$ as the representative reference.
At $(6,5)$, \textsc{TierSecAgg} outperforms FedAvg on MNIST
($96.84$ vs $95.54$) and FMNIST ($82.70$ vs $79.93$), while FedAvg is
stronger on UCI HAR ($93.59$ vs $89.22$), a reflection of the
dense-vs-sign algorithmic gap rather than the secure aggregation
layer.
FedAvg is nearly topology-invariant (range $<0.05$\,percentage points (pp), hence a single
column), whereas signSGD-MV/\textsc{TierSecAgg} shows moderate topology
dependence ($\le 2.1$\,pp) because majority voting is non-linear at
both tiers.
All topologies converge to $88$--$90$\,\% within 200 rounds on UCI HAR,
and at $(6,5)$ increasing the DE-tier iteration count accelerates early
convergence with saturation around $93$--$94\%$ for
$K_{\mathrm{DE}}\ge 10$, indicating that the hierarchical decomposition
does not significantly degrade learning utility.

\begin{table}[t]
\centering
\caption{Test accuracy (\%) per topology.}
\label{tab:topology_signsgd_detail}
\renewcommand{\arraystretch}{1.0}
\setlength{\tabcolsep}{1.6pt}
\scriptsize
\resizebox{\columnwidth}{!}{%
\begin{tabular}{@{}lcccccc@{}}
\toprule
\multirow{2}{*}{\textbf{Dataset}}
  & \multicolumn{5}{c}{\textbf{signSGD-MV/\textsc{TierSecAgg}}}
  & \multirow{2}{*}{\textbf{FedAvg}} \\
\cmidrule(lr){2-6}
  & $6{\times}5$ & $5{\times}6$ & $10{\times}3$ & $3{\times}10$ & $1{\times}30$ &  \\
\midrule
MNIST   & $\mathbf{96.84{\pm}0.34}$ & $96.31{\pm}0.66$ & $96.69{\pm}0.36$ & $96.47{\pm}0.09$ & $96.62{\pm}0.23$ & $95.54{\pm}0.49$ \\
FMNIST  & $\mathbf{82.70{\pm}0.77}$ & $80.64{\pm}1.34$ & $82.00{\pm}1.46$ & $81.35{\pm}1.23$ & $81.96{\pm}0.72$ & $79.93{\pm}0.21$ \\
UCI HAR & $89.22{\pm}2.16$ & $\mathbf{89.32{\pm}1.17}$ & $89.00{\pm}0.94$ & $88.61{\pm}0.59$ & $87.95{\pm}1.40$ & $93.59{\pm}0.22$ \\
\bottomrule
\end{tabular}%
}
\end{table}

\paragraph*{60-client sensitivity}
To examine whether the trend persists beyond 30 clients, we ran MNIST with $N=60$ over the large-cluster topologies $(5,12)$ and $(3,20)$ and DE-tier iteration counts $K_{\mathrm{DE}}\in\{1,5,10,20,30,50\}$. A single DE-tier iteration is clearly suboptimal, and $K_{\mathrm{DE}}\ge 5$ saturates at $98.3$--$99.1\%$ with small seed variance and no degradation up to $K_{\mathrm{DE}}=50$ (Table~\ref{tab:mnist_60_de_sensitivity}).

\begin{table}[t]
\centering
\caption{MNIST 60-client sensitivity for representative topologies.
Test accuracy (\%) over 200 rounds.}
\label{tab:mnist_60_de_sensitivity}
\renewcommand{\arraystretch}{1.05}
\setlength{\tabcolsep}{1.2pt}
\scriptsize
\begin{tabular}{@{}lcccccc@{}}
\toprule
\textbf{Topology} & \textbf{DE=1} & \textbf{DE=5} & \textbf{DE=10} & \textbf{DE=20} & \textbf{DE=30} & \textbf{DE=50} \\
\midrule
$5\times12$ & 96.63$\pm$0.63 & 98.49$\pm$0.33 & 98.74$\pm$0.01 & 98.82$\pm$0.11 & 98.81$\pm$0.20 & 98.79$\pm$0.27 \\
$3\times20$ & 96.17$\pm$0.44 & 98.27$\pm$0.21 & 98.93$\pm$0.04 & 98.85$\pm$0.23 & 99.08$\pm$0.03 & 98.86$\pm$0.21 \\
\bottomrule
\end{tabular}
\end{table}

\subsection{EC-Tier Attack Evaluation}
\label{sec:exp_attack}

We evaluate the learning-layer effect of EC-tier malicious behavior at
$(Q,n_q)=(10,3)$, the largest EC-tier party count among the evaluated
configurations ($t_{\mathrm{EC}}\le 2$ under $t<Q/4$).
This illustrates how the protocol-level distinction of
Section~\ref{sec:fabrication} appears at the learning layer, using the
injection framework of Appendix~\ref{app:attack_framework}.
\emph{Distribution-level inconsistency} (Case~1) is injected with and
without the EC-tier protection, and \emph{semantic fabrication}
(Case~4) is injected as Byzantine-flip and Byzantine-random modes,
which the consistency check accepts by design.

\begin{table}[t]
\centering
\caption{Empirical EC-tier attack evaluation on UCI HAR
($(Q,n_q)=(10,3)$, $t_{\mathrm{EC}}=2$).}
\label{tab:attack_eval_main}
\renewcommand{\arraystretch}{1.05}
\setlength{\tabcolsep}{6pt}
\scriptsize
\begin{tabular}{@{}llc@{}}
\toprule
\textbf{Mode} & \textbf{$m$} & \textbf{Test acc.\ (\%)} \\
\midrule
Clean baseline & $0$ & $91.63{\pm}1.09$ \\
\midrule
\multirow{3}{*}{Byzantine-flip}
  & $1$ & $93.58{\pm}0.31$ \\
  & $2$ & $92.25{\pm}1.12$ \\
  & $3$ & $90.06{\pm}1.25$ \\
\midrule
\multirow{3}{*}{Byzantine-random}
  & $1$ & $92.82{\pm}0.76$ \\
  & $2$ & $93.37{\pm}0.57$ \\
  & $3$ & $93.25{\pm}0.24$ \\
\midrule
\multirow{3}{*}{Equivocation (prot.)}
  & $1$ & $91.63{\pm}1.09$ \\
  & $2$ & $91.63{\pm}1.09$ \\
  & $3$ & $\phantom{0}18.90{\pm}8.30$ \\
\midrule
\multirow{3}{*}{Equivocation (unprot.)}
  & $1$ & $93.58{\pm}0.31$ \\
  & $2$ & $92.25{\pm}1.12$ \\
  & $3$ & $90.06{\pm}1.25$ \\
\bottomrule
\end{tabular}
\end{table}

Table~\ref{tab:attack_eval_main} and Fig.~\ref{fig:uci-har-ec-attack}
confirm the distinction.
In the protected mode with $m\le t_{\mathrm{EC}}=2$, all 200 attacked
rounds are detected and corrected and accuracy matches the clean
baseline ($91.63\%$).
At $m=3>t_{\mathrm{EC}}$, every round is detected but aborted rather
than corrected, so accuracy collapses to $18.90\%$ (near the 6-class
random level), the intended fail-safe of aborting instead of silently
corrupting the model under the basic policy, while the
dealer-identification fallback of Section~\ref{sec:ec_construction} would
instead exclude the equivocating edges and continue.
The unprotected mode never detects the inconsistency, and its accuracy
coincides exactly with Byzantine-flip because, absent the check, an
equivocating share reduces to a sign-flipped edge report.
Semantic fabrication remains within $\sim1.6$\,pp of baseline even at
$m=3$, consistent with~\eqref{eq:f4_bound} since UCI HAR votes are
rarely near a tie, and configurations marginally above baseline reflect
seed variance.

\begin{figure}[t]
\centering
\includegraphics[width=0.95\columnwidth]{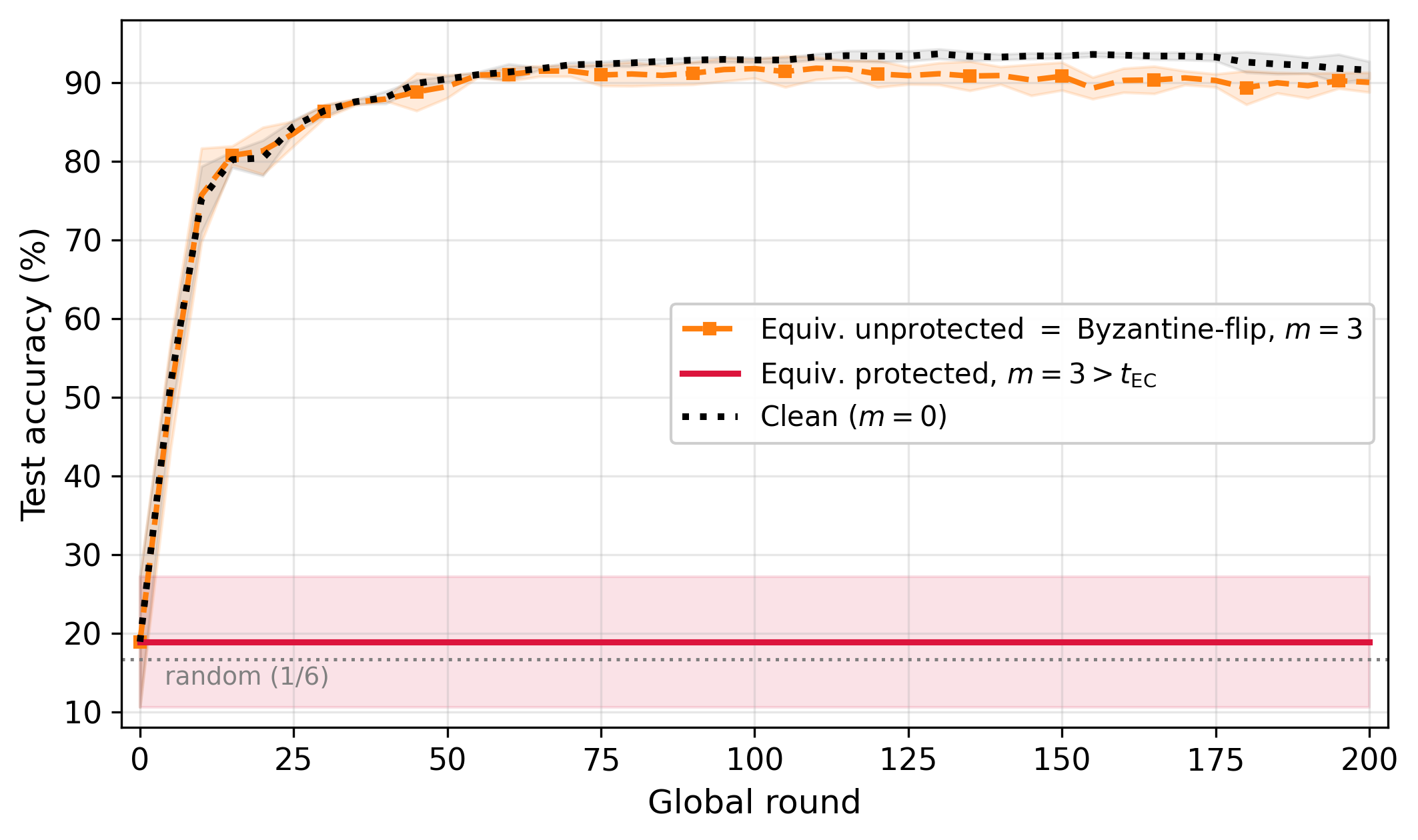}
\caption{UCI HAR convergence under representative EC-tier attacks at
$(Q,n_q)=(10,3)$, $t_{\mathrm{EC}}=2$.}
\label{fig:uci-har-ec-attack}
\end{figure}

\subsection{DE-Tier Byzantine and Dropout Recovery}
\label{sec:exp_de_recovery}

We verify the DE-tier recovery boundary of
Proposition~\ref{prop:de_security}(iii) at $(Q,n_q)=(5,6)$ with
$t_{\mathrm{DE}}=1$, giving the conservative region
$2e+s\le n_q-2t_{\mathrm{DE}}-1=3$.
Per round and cluster we inject $e$ Byzantine devices (sign-flipped or
random share forwards) and $s$ dropouts, sweeping $(e,s)$ inside and
outside the region.

\begin{table}[t]
\centering
\caption{Empirical DE-tier Byzantine/dropout recovery on UCI HAR
($(Q,n_q)=(5,6)$, $t_{\mathrm{DE}}=1$, conservative bound
$2e+s\le 3$).}
\label{tab:de_recovery_eval}
\renewcommand{\arraystretch}{1.05}
\setlength{\tabcolsep}{5pt}
\scriptsize
\begin{tabular}{@{}ccccc@{}}
\toprule
\textbf{$e$} & \textbf{$s$} & \textbf{$2e+s$}
& \textbf{Test acc.\ (\%)} & \textbf{Recovered} \\
\midrule
$0$ & $0$ & $0$ & $92.33{\pm}0.34$ & Yes \\
$0$ & $1$ & $1$ & $92.33{\pm}0.34$ & Yes \\
$1$ & $0$ & $2$ & $92.33{\pm}0.34$ & Yes \\
$0$ & $2$ & $2$ & $92.33{\pm}0.34$ & Yes \\
$1$ & $1$ & $3$ & $92.33{\pm}0.34$ & Yes \\
$0$ & $3$ & $3$ & $92.33{\pm}0.34$ & Yes \\
\midrule
$2$ & $0$ & $4$ & $90.84{\pm}0.72$ & No \\
$2$ & $1$ & $5$ & $72.14{\pm}2.32$ & No \\
$3$ & $0$ & $6$ & $40.13{\pm}16.84$ & No \\
\bottomrule
\end{tabular}
\end{table}

\begin{figure}[t]
\centering
\includegraphics[width=0.95\columnwidth]{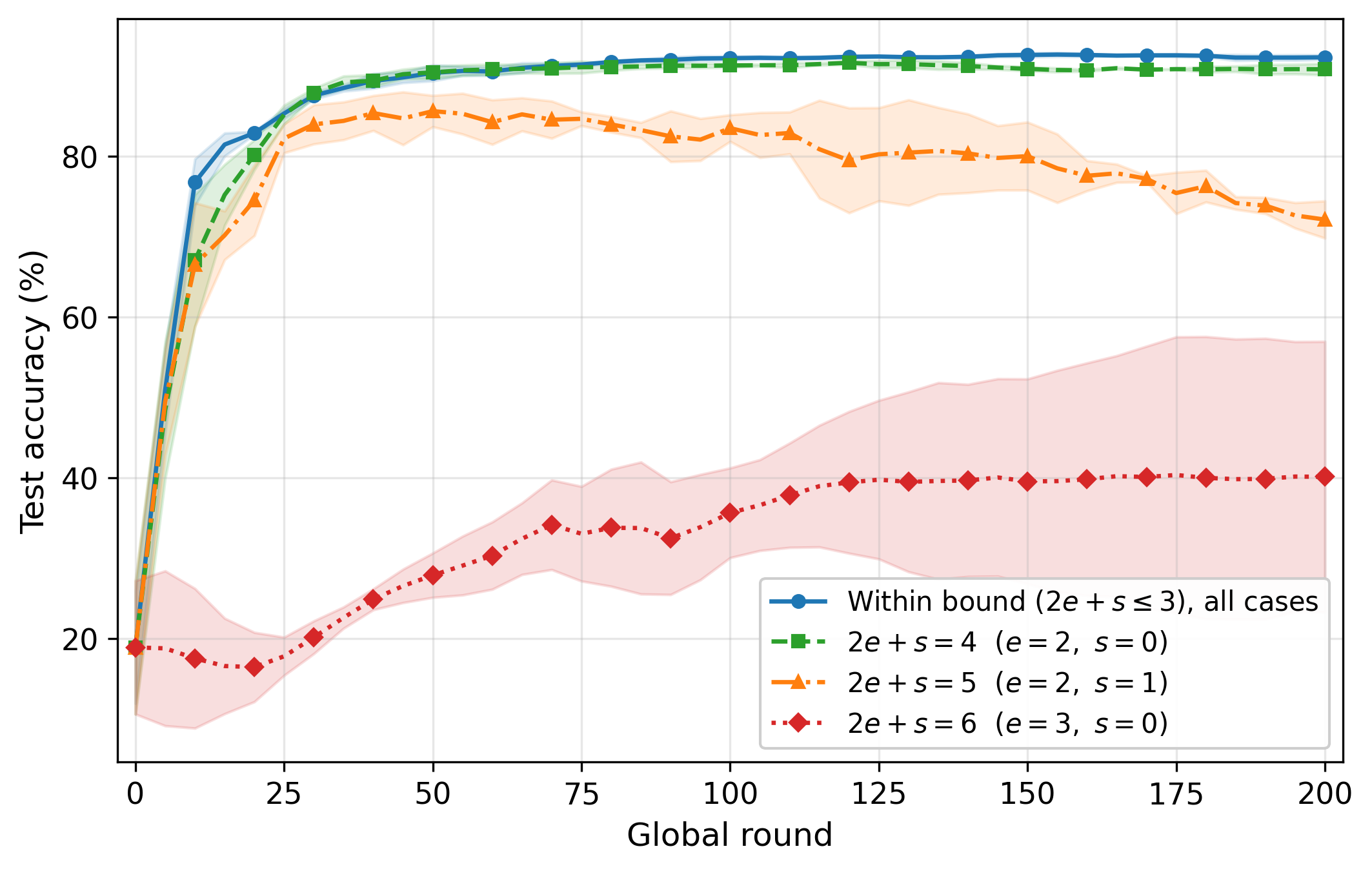}
\caption{UCI HAR convergence under DE-tier Byzantine/dropout injection
at $(Q,n_q)=(5,6)$, $t_{\mathrm{DE}}=1$.
}
\label{fig:uci-har-de-recovery}
\end{figure}

Table~\ref{tab:de_recovery_eval} and
Fig.~\ref{fig:uci-har-de-recovery} confirm the boundary.
Every $(e,s)$ with $2e+s\le 3$ recovers in every round of every run
at baseline accuracy ($92.33\%$), with all six convergence curves
coinciding exactly with the clean baseline and therefore drawn as a
single curve.
Outside the bound recovery fails in every round and accuracy degrades
monotonically with the violation margin ($90.84\%$, $72.14\%$, and
$40.13\%$ at $2e+s=4,5,6$), with the large variance at $2e+s=6$
reflecting unstable sign-flipped cluster outputs.
Immediately beyond the boundary the degradation is mild, since
uncorrected errors leave most cluster signs intact and are absorbed by
the global vote, but it compounds as errors accumulate across
clusters.

The experiments confirm that
(i)~\textsc{TierSecAgg} preserves the learning utility of plaintext
signSGD-MV-HFL under honest execution,
(ii)~the topology choice exposes a DE--EC trade-off in which larger
clusters improve DE-tier feasibility while larger $Q$ improves EC-tier
malicious-edge tolerance at $O(Q^2 d\log p)$ cost, and
(iii)~distribution-level EC inconsistency is handled by the
consistency-check/MDS mechanism within threshold, while semantic
fabrication appears only as bounded degradation governed
by~\eqref{eq:f4_bound}.

\section{Conclusion}
\label{sec:discussion}

In this paper, we have presented \textsc{TierSecAgg}, a tier-stratified secure aggregation
architecture for signSGD-based HFL in IoT edge--cloud systems.
It decomposes HFL signSGD-MV secure aggregation into a DE-tier ideal
functionality and an EC-tier protocol connected through a one-directional,
information-minimal interface.
The DE tier introduces single-mask power sharing to realize hierarchical operation.
The EC tier addresses the security surface specific to HFL signSGD-MV
with the only commitment-free OTP/Shamir/MDS design among the
examined candidates that meets it.
A fabrication taxonomy separates defended behaviors from
threshold-bounded ones.
The compositional security of the full protocol is established,
jointly providing privacy, dropout recovery, and aggregation-layer
Byzantine recovery without adversary-specific verification
subprotocols.
Experiments confirm that the secure realization preserves the
plaintext learning behavior under honest execution and validate the
analytical recovery boundaries.
The tier-stratified interface allows each future extension, including
output-validity mechanisms that detect semantic fabrication, joint
DE/EC and adaptive corruption models, and sub-quadratic EC-tier
variants for larger multi-site deployments, to be pursued
independently.

\appendices

\section{Proof of Proposition~\ref{prop:de_security}}
\label{app:de_proof}

Fix a cluster $q$ and round $\tau$ with active set $S_q^\tau$ of size
$n_q^\tau$, threshold $t_{\mathrm{DE}}<n_q^\tau/2$, field
$\mathbb{F}_{p_{\mathrm{DE}}}$ with $p_{\mathrm{DE}}>n_q^\tau$, and MV
polynomial $F_q(\cdot)$ of degree $N\le p_{\mathrm{DE}}-1$.

\paragraph*{(i) Constant-round online operation}
By inspection, the online phase consists of one share upload of
$[\hat{x}^{(\tau)}]_{t_{\mathrm{DE}}}$, one broadcast of
$\hat{x}^{(\tau)}$, and one share upload of
$[F_q(x_q^{(\tau)})]_{t_{\mathrm{DE}}}$, with all preparation
input-independent and offline.

\paragraph*{(ii) $t_{\mathrm{DE}}$-privacy}
Let $\mathcal{C}\subseteq S_q^\tau\cup\{q\}$ be any semi-honest
coalition with $|\mathcal{C}|\le t_{\mathrm{DE}}$.
The coalition holds at most $t_{\mathrm{DE}}$ Shamir shares of the
fresh mask $a$, of each power $a^j$, of each coefficient $P_r$, and
of each honest input, all uniform and input-independent in its view
by Shamir perfect secrecy~\cite{shamir1979}.
Hence the sole online opening
$\hat{x}^{(\tau)}=x_q^{(\tau)}-a$ is one-time padded and therefore
uniform over $\mathbb{F}_{p_{\mathrm{DE}}}$ and independent of
$x_q^{(\tau)}$, and the post-opening step is a purely local linear
combination whose reconstruction yields only $\tilde{g}_q(\tau)$.
The simulator $\mathcal{S}_{\mathrm{DE}}$, given $\tilde{g}_q(\tau)$
and the coalition's inputs and randomness, outputs
(a)~the coalition's own state verbatim,
(b)~uniform values for the coalition's evaluations of every honest
sharing,
(c)~a uniform $\hat{x}^*\xleftarrow{\$}\mathbb{F}_{p_{\mathrm{DE}}}$
for the masked opening, and
(d)~for the final opening a uniformly random degree-$t_{\mathrm{DE}}$
polynomial matching the coalition's local linear combination on
coalition indices and reconstructing to $\tilde{g}_q(\tau)$ at $0$.
Each simulated distribution equals the corresponding real one
exactly, so the simulation is perfect.
Under the PRSS instantiation the honest sharings are PRG outputs, and a
standard hybrid argument substituting uniform sharings gives
computational privacy with total advantage
$\le\mathsf{negl}(\lambda)$ in the PRG security parameter $\lambda$.

\paragraph*{(iii) Joint dropout/Byzantine tolerance}
An opening of a degree-$\delta$ sharing among $n_q^\tau$ parties is a
codeword of an $[n_q^\tau,\delta+1,n_q^\tau-\delta]$ MDS code, so
unique decoding tolerates $2e+s\le n_q^\tau-\delta-1$.
The final opening has $\delta=t_{\mathrm{DE}}$ (setup-completed
bound), while degree-reduction openings during offline DN
multiplication~\cite{damgard2007} have $\delta=2t_{\mathrm{DE}}$,
giving the conservative bound
$2e+s\le n_q^\tau-2t_{\mathrm{DE}}-1$ when preparation runs on the
online critical path.
Under the applicable condition every opening succeeds
deterministically and the output matches the honest execution.
\hfill\qed

\section{Proof of Proposition~\ref{prop:ec_security}}
\label{app:ec_proof}

\paragraph*{(i) Share-consistency recovery}
Let $\mathcal{S}$ be the set of honest edges,
$|\mathcal{S}|\ge Q-t\ge 3t+1$, whose $C_v$ are computed faithfully,
while the adversary sees $\chi$ before broadcasting and chooses its
own $C_v$ values and forwarded shares arbitrarily.
A degree-$t$ polynomial is determined by any $t+1$ evaluations and
$|\mathcal{S}|\ge t+2$, so the joint test can accept only if the
honest restriction $\{(v,C_v)\}_{v\in\mathcal{S}}$ is itself
consistent with a single degree-$t$ polynomial: the adversary's
coordinates can be filled in to match it but never repair an
inconsistency among the honest coordinates.
Let $H_{\mathcal{S}}$ be a parity-check matrix of the code punctured
to $\mathcal{S}$, again MDS with $|\mathcal{S}|-t-1\ge 1$ parity
rows, with entries in $\mathbb{F}_p$, and remaining a parity-check
matrix of the code extended by scalars to $\mathbb{G}$.
Call dealer $q$ \emph{effectively malformed} if
$\boldsymbol{\sigma}_q=H_{\mathcal{S}}\,\boldsymbol{f}_q|_{\mathcal{S}}
\ne\boldsymbol{0}$.
A dealing inconsistent only at corrupted coordinates defines a unique
degree-$t$ polynomial and a unique $y_q^{(\tau)}$ through its honest
coordinates and is equivalent to a well-formed dealing followed by
forwarding errors.
If some dealer $q^*$ is effectively malformed, acceptance requires
$H_{\mathcal{S}}(\boldsymbol{C}|_{\mathcal{S}})
=\sum_{q=1}^{Q}\chi^q\boldsymbol{\sigma}_q=\boldsymbol{0}$.
Each coordinate of this sum is a univariate polynomial in $\chi$ over
$\mathbb{F}_p$ of degree at most $Q$ with zero constant term, at
least one coordinate is nonzero because
$\boldsymbol{\sigma}_{q^*}\ne\boldsymbol{0}$, and such a polynomial
has at most $Q$ roots in the extension field $\mathbb{G}$, so
acceptance occurs with probability at most
$Q/|\mathbb{G}|\le 2^{-\kappa}$ over the uniform
$\chi\leftarrow\mathbb{G}$, published only after all dealings are
fixed (Section~\ref{sec:ec_protocol}).
Conditioned on acceptance, every dealing agrees on the honest
coordinates with a unique degree-$t$ codeword of the $[Q,t+1,Q-t]$
code.
Deviations at corrupted coordinates, whether introduced at dealing or
at forwarding, together with honest dropouts then amount to $e\le t$
errors and $s$ erasures relative to this codeword, and under $t<Q/4$,
$e_{\max}=\lfloor(Q-t-1)/2\rfloor\ge t$, so MDS unique decoding
recovers $y_q^{(\tau)}=f_q^{(\tau)}(0)$ whenever $2e+s\le Q-t-1$.
Under coordinate batching the identical argument applies with degree
at most $Qd$, giving soundness $Qd/|\mathbb{G}|\le 2^{-\kappa}$ for
$|\mathbb{G}|\ge Qd\cdot 2^{\kappa}$.

\paragraph*{(ii) Per-edge output privacy}
Consider first the all-honest execution.
The cloud's view consists of the forwarded shares, which determine
$\mathbf{y}$ and $Z(\tau)=\sum_q y_q^{(\tau)}$, while the
consistency-check messages are deterministic functions of these
shares and the public $\chi$.
Under non-collusion the cloud sees no pairwise OTP value, so by
Lemma~\ref{lem:joint_otp} $\boldsymbol{\mu}^{(\tau)}$ is uniform over
the zero-sum hyperplane $\mathcal{H}_0$.
For any $\hat{\mathbf{g}},\hat{\mathbf{g}}'$ with the same aggregate
$Z$, the shift $\Delta:=D(\hat{\mathbf{g}}-\hat{\mathbf{g}}')$ has
zero component sum, so $\Delta\in\mathcal{H}_0$ and
$\boldsymbol{\mu}\mapsto\boldsymbol{\mu}+\Delta$ is a bijection of
$\mathcal{H}_0$, making
$\mathbf{y}=D\hat{\mathbf{g}}+\boldsymbol{\mu}$ and
$\mathbf{y}'=D\hat{\mathbf{g}}'+\boldsymbol{\mu}'$ identically
distributed by one-time-pad
uniformity~\cite{shannon1949communication}.
Given $\mathbf{y}$, each dealer samples a fresh uniform degree-$t$
polynomial with constant term $y_q^{(\tau)}$ and the cloud observes
exactly its evaluations, so the view is a fixed randomized function
of $\mathbf{y}$ and is identically distributed under
$\hat{\mathbf{g}}$ and $\hat{\mathbf{g}}'$.
Under corruption of at most $t$ edges the argument applies to the
honest coordinates.
The adversary sees at most $t$ evaluations of each honest dealing,
independent of the honest $y_q^{(\tau)}$ by Shamir perfect
secrecy~\cite{shamir1979}, plus its own inputs and OTP values, so
every adversarial message is a randomized function of a view
independent of the honest outputs.
Lemma~\ref{lem:joint_otp} applied to the pairwise OTPs among honest
edges alone makes the honest masks, conditioned on everything the
adversary knows, uniform over the zero-sum hyperplane of
$\mathbb{F}_p^{\mathcal{S}}$ up to an adversary-known offset, and the
shift argument yields perfect indistinguishability under any two
honest-output assignments with the same honest weighted sum, which is
Definition~\ref{def:ec_privacy} restricted to the honest edges.
\hfill\qed

\section{Empirical Attack Framework}
\label{app:attack_framework}

Let $\mathbf{u}_q^{(r)}\in\{-1,+1\}^d$ be edge $q$'s output at round
$r$, with the EC tier computing
$\mathbf{g}^{\mathrm{glob},(r)}=\mathrm{sign}(\sum_q D_q\widetilde{\mathbf{u}}_q^{(r)})$
over possibly corrupted $\widetilde{\mathbf{u}}_q^{(r)}$.
A malicious set of size $m$ attacks with probability
$p_{\mathrm{atk}}$ on coordinate fraction $\rho$, under modes
\emph{none}, \emph{Byzantine-flip}, \emph{Byzantine-random},
\emph{equivocation-unprotected}, and \emph{equivocation-protected}
(recover if $m\le t_{\mathrm{EC}}$, else abort, a conservative
policy that reserves the remaining decoding margin for concurrent
dropouts rather than exhausting it on error correction).
The Byzantine modes correspond to Case~4 and are bounded
by~\eqref{eq:f4_bound} rather than detected, while the protected mode
evaluates the defense against distribution-level inconsistency
(Cases~1--3) at the abstraction level of edge outputs.

\section{Communication and Threshold Analysis}
\label{app:additional_exp}

\paragraph*{Topology, threshold, and security role}
Table~\ref{tab:comm_thresh} summarizes the security role, EC-tier
tolerance, and $Q^2$ scaling factor of each evaluated topology.
For a fixed client count, increasing $Q$ improves the admissible
$t_{\mathrm{EC}}$ at quadratic inter-edge cost, while increasing
$n_q$ improves the feasibility of threshold-based DE-tier secure
majority evaluation, yielding a communication--security Pareto
frontier whose operating point depends on deployment priorities.

\begin{table}[t]
\centering
\caption{EC-tier scaling, tolerance, and security role per topology.}
\label{tab:comm_thresh}
\renewcommand{\arraystretch}{1.05}
\setlength{\tabcolsep}{2.0pt}
\scriptsize
\resizebox{\columnwidth}{!}{%
\begin{tabular}{@{}lcccL{3.2cm}@{}}
\toprule
\textbf{Topology} & $Q^2$ & $t<Q/4$ & $t<Q/3$ (compl.) & 
\textbf{Primary role} \\
\midrule
$1{\times}30$  & N/A & N/A & N/A & 
Flat baseline, no EC-tier surface. \\
$3{\times}10$  & 9   & $=0$ & $=0$ & 
Large DE clusters, no EC malicious-edge recovery. \\
$5{\times}6$   & 25  & $\le 1$ & $\le 1$ & 
Balanced operating point. \\
$6{\times}5$   & 36  & $\le 1$ & $\le 1$ & 
Balanced, representative setting. \\
$10{\times}3$  & 100 & $\le 2$ & $\le 3$ & 
EC robustness stress point, small clusters limit DE tolerance. \\
\bottomrule
\end{tabular}%
}
\end{table}

\paragraph*{Scaling comparison}
Table~\ref{tab:comm_scaling} compares per-round communication scaling.
\textsc{TierSecAgg} matches additive SecAgg at the DE tier and pays an
additional $Q$ factor at the EC tier in exchange for per-edge output
privacy and share-consistency recovery, an overhead intrinsic to
EC-tier accountability without commitments.

\begin{table}[t]
\centering
\caption{Per-round communication scaling (bits).}
\label{tab:comm_scaling}
\renewcommand{\arraystretch}{1.0}
\setlength{\tabcolsep}{2.0pt}
\scriptsize
\resizebox{\columnwidth}{!}{%
\begin{tabular}{@{}lccc@{}}
\toprule
\textbf{Method} & \textbf{DE tier (bits)} & \textbf{EC tier (bits)} & \textbf{EC consistency} \\
\midrule
FedAvg                & $O(32\,Qn_qd)$ & $O(32\,Qd)$   & No \\
Plain signSGD-MV      & $O(Qn_qd)$ & $O(Qd)$    & No \\
Additive SecAgg       & $O(32\,Qn_qd)$ & $O(32\,Qd)$   & No \\
HE aggregation        & $O(Qn_qd\log \lambda_{\mathrm{HE}})$ & $O(Qd\log \lambda_{\mathrm{HE}})$ & Limited \\
\textsc{TierSecAgg}   & $O(Qn_qd\log p_{\mathrm{DE}})^{\ast}$ & $O(Q^2 d\log p)$ & Yes \\
\bottomrule
\multicolumn{4}{@{}l}{\scriptsize
$^{\ast}$MV-evaluation layer. Default explicit input sharing adds
$O(Qn_q^2 d\log p_{\mathrm{DE}})$ (Section~\ref{sec:de_concrete}).}
\end{tabular}%
}
\end{table}

\paragraph*{Concrete communication costs}
For the 30-client deployments ($Qn_q=30$ and $D_q=n_q$, so
$\sum_q D_q=30$), the
minimum-size primes are $p_{\mathrm{DE}}=31$
($\approx 5$\,bits) and $p=61$ ($\approx 6$\,bits), and under
coordinate batching the consistency check contributes only one
element of $\mathbb{G}$ per edge per round, roughly
$\kappa+\log_2(Qd)$ bits, negligible next to the share traffic.
The DE-tier online cost is
$\approx 2Qn_q d\log_2 p_{\mathrm{DE}}/8$ bytes and the EC-tier cost
is $\approx(2Q^2-Q)d\log_2 p/8$ bytes.
Table~\ref{tab:concrete_mb} reports the resulting per-round costs at
the representative $(6,5)$: on UCI HAR ($d=37{,}254$),
\textsc{TierSecAgg} incurs $1.40$\,MB (DE, MV-evaluation) and
$1.84$\,MB (EC), below the FedAvg cost of $4.47$\,MB and well within
IoT uplink budgets.
Across topologies, the explicit input-sharing cost grows with cluster
size (up to $264.42$\,MB at $(1,30)$ for the MNIST/FMNIST model),
whereas the PRSS instantiation is topology-independent ($25.59$\,MB
for MNIST/FMNIST, $2.10$\,MB for UCI HAR), a saving that grows from
$2.0\times$ at
$(6,5)$ to $10.3\times$ at $(1,30)$ at the cost of computational
rather than information-theoretic DE-tier privacy.
The EC-tier cost grows with $Q^2$, reaching $5.31$\,MB (UCI HAR) and
$64.83$\,MB (MNIST/FMNIST, $d=454{,}922$) at $(10,3)$.
Offline preparation is input-independent with $O(N)$ storage per cluster, but its communication is not negligible. With $2n_q$ field elements exchanged per DN multiplication, the per-round offline traffic at $(6,5)$ with $N=30$ is approximately $40.5$\,MB for UCI HAR and $494.7$\,MB for the MNIST/FMNIST model, and
explicit batch generation of the double sharings adds a further $54.0$\,MB and $659.6$\,MB respectively, a term that vanishes under the PRSS instantiation of Remark~\ref{rem:prss}. Offline preparation runs in parallel across clusters on their respective edges and overlaps with local training, and Table~\ref{tab:compute_bench} reports the measured per-cluster preparation time.

\begin{table}[t]
\centering
\caption{Approximate per-round communication (MB) at $(Q,n_q)=(6,5)$.}
\label{tab:concrete_mb}
\renewcommand{\arraystretch}{1.0}
\setlength{\tabcolsep}{2.0pt}
\scriptsize
\resizebox{\columnwidth}{!}{%
\begin{tabular}{@{}lccccccc@{}}
\toprule
\textbf{Model} & \textbf{$d$} & \textbf{Plain DE}
& \textbf{Secure DE (MV)} & \textbf{Secure DE (full)}
& \textbf{Secure DE (PRSS)}
& \textbf{FedAvg DE} & \textbf{EC ($6{\times}5$)} \\
& & (1 bit) & ($\log p_{\mathrm{DE}}$) & ($\log p_{\mathrm{DE}}$)
& ($\log p_{\mathrm{DE}}$)
& (32-bit float) & ($\log p$) \\
\midrule
MNIST/FMNIST CNN & $454{,}922$ & $1.71$ & $17.06$ & $51.18$ & $25.59$ & $54.59$ & $22.52$ \\
UCI HAR 1D-CNN   & $37{,}254$  & $0.14$ & $1.40$  & $4.19$  & $2.10$  & $4.47$  & $1.84$ \\
\bottomrule
\end{tabular}%
}
\end{table}

\paragraph*{Comparison with other secure aggregation methods}
Any secret-sharing scheme needs shares over a field larger than the
number of share-holders, so any sign-based secure aggregation with
$n_q\le 30$ costs at least
$\log_2 p_{\mathrm{DE}}\approx 5$\,bits/coord, and
\textsc{TierSecAgg}'s $30$\,bits/coord is exactly six such elements
per device per coordinate (input sharing to the $n_q-1=4$ peers plus
two online MV-evaluation uploads), while plain signSGD-MV's
$1$\,bit/coord is unattainable by any secure realization.
Table~\ref{tab:secure_agg_compare} compares secure methods at $(6,5)$
on UCI HAR: \textsc{TierSecAgg} is $\approx68\times$ cheaper than
HE-based FL ($4.19$ vs $\approx286$\,MB at
$\log_2\lambda_{\mathrm{HE}}=2{,}048$, a gap of $\sim3.5$\,GB/round
on the MNIST/FMNIST model) and within $\sim6\%$ of additive SecAgg
while additionally providing HFL signSGD-MV compatibility, EC-tier
per-edge privacy, and share-consistency recovery, none offered by
existing methods in this setting.

\begin{table}[t]
\centering
\caption{Per-round communication and capability comparison at
$(Q,n_q)=(6,5)$ on UCI HAR
($\checkmark$/$\triangle$/$\times$: native/partial/no).}
\label{tab:secure_agg_compare}
\renewcommand{\arraystretch}{1.05}
\setlength{\tabcolsep}{2pt}
\scriptsize
\resizebox{\columnwidth}{!}{%
\begin{tabular}{@{}L{2.7cm}C{0.95cm}C{0.95cm}C{0.6cm}C{0.7cm}C{0.7cm}@{}}
\toprule
\textbf{Method} & \textbf{bits/coord} & \textbf{DE/round (MB)}
& \textbf{HFL sign-MV} & \textbf{EC privacy} & \textbf{EC recovery} \\
\midrule
Plain signSGD-MV
  & $1$ & $0.14$ & N/A & N/A & N/A \\
Plain FedAvg
  & $32$ & $4.47$ & N/A & N/A & N/A \\
\midrule
HE-FL ($\lambda_{\mathrm{HE}}{=}2{,}048$)
  & $2{,}048$ & $\approx 286$ & $\times$ & $\triangle$ & $\times$ \\
Additive SecAgg~\cite{bonawitz2017practical}
  & $32$ & $4.47$ & $\times$ & $\triangle$ & $\times$ \\
Hi-SAFE~\cite{joo2024hisafe}
  & $\ge\log p$ & multi-round & flat only & $\times$ & $\times$ \\
TEE-HFL~\cite{mo2021ppfl}
  & $32$ (HW) & $4.47$ & $\triangle$ & $\triangle$ & $\triangle$ \\
\midrule
\textbf{\textsc{TierSecAgg}}
  & $\mathbf{30}$ & $\mathbf{4.19}$ & $\checkmark$ & $\checkmark$ & $\checkmark$ \\
\bottomrule
\end{tabular}%
}
\end{table}

\paragraph*{Computation-time benchmarks}
Table~\ref{tab:compute_bench} reports the single-core wall-clock
computation time per global round of a vectorized NumPy implementation
of all cryptographic operations at the representative
$(Q,n_q)=(6,5)$ with $t_{\mathrm{DE}}=t_{\mathrm{EC}}=1$,
$p_{\mathrm{DE}}=31$, and $p=61$.
The device-side cost covers input sharing, local share aggregation,
the masked-opening share, and the Horner evaluation of the
MV-polynomial share.
The edge-side cost covers the two robust openings, and the cloud-side
cost covers per-dealer MDS decoding, aggregation, and sign extraction.
Both decoding paths belong to \textsc{TierSecAgg}: every round first
attempts the fault-free path (interpolation from $t+1$ points plus
verification of all evaluations) and switches to the fault path
(combinatorial unique decoding within $2e+s\le d_{\min}-1$) only upon
a verification failure, so the per-round cost is that of exactly one
path.
All online costs scale linearly in $d$ and remain below $0.3$\,s per
round even for the $d=454{,}922$ MNIST/FMNIST model on the fault-free
path, with the offline preparation row incurred off the online
critical path as discussed above, so the online per-round
computation is dominated by local model training rather than by the
secure aggregation layer.
The heavier fault-path cost, $2.4$\,s at the cloud for the
MNIST/FMNIST model, is incurred only in rounds where Byzantine errors
or dropouts actually
occur, so the steady-state per-round computation of
\textsc{TierSecAgg} equals the fault-free figures and the fault path
is a contingent cost paid exactly when error correction is needed.

\begin{table}[t]
\centering
\caption{Per-round single-core computation time (ms) of
\textsc{TierSecAgg} at $(Q,n_q)=(6,5)$,
$t_{\mathrm{DE}}=t_{\mathrm{EC}}=1$.
}
\label{tab:compute_bench}
\renewcommand{\arraystretch}{1.05}
\setlength{\tabcolsep}{4pt}
\footnotesize
\begin{tabular}{@{}lcc@{}}
\toprule
\textbf{Operation} & \textbf{UCI HAR} & \textbf{MNIST/FMNIST} \\
\midrule
DE offline preparation (per cluster) & $284$ & $3277$ \\
DE device (online total)   & $8.9$   & $119.0$ \\
DE edge (fault-free round)  & $5.0$   & $73.6$ \\
DE edge (faulty round)      & $18.3$  & $247.9$ \\
EC edge                     & $3.7$   & $70.3$ \\
EC cloud (fault-free round) & $20.1$  & $281.6$ \\
EC cloud (faulty round)     & $181.0$ & $2446.0$ \\
\bottomrule
\end{tabular}
\end{table}

\paragraph*{Online comparison with Hi-SAFE}
To quantify the online gap against the only prior signSGD-MV protocol
on equal footing, we implemented both online paths in the flat setting
$(Q,n_q)=(1,30)$ with $p_{\mathrm{DE}}=31$, identical vectorized NumPy
backends, fault-free openings, and preprocessing excluded from both
measurements.
Beaver-triple evaluation as in Hi-SAFE~\cite{joo2024hisafe} uses $N+1$
openings and $N-1$ online secure multiplications per coordinate,
whereas the single-mask construction uses $2$ openings and none,
reducing the online communication from $2n_q(N{+}1)$ to $4n_q$ field
elements per coordinate (a factor of $15.5$ here) and the measured
single-core online time from $734.9$ to $135.8$\,ms per round on
UCI HAR, while replacing $N+1$ sequential online round trips with $2$.

\paragraph*{EC-tier recovery boundary}
At $(Q,n_q)=(10,3)$ with $t_{\mathrm{EC}}=2$, the accepted code has
distance $Q-t_{\mathrm{EC}}=8$ and unique decoding recovers from any
combination of forwarding errors $e$ and erasures $s$ with
$2e+s\le 7$ (e.g., $e=2$ with $s\le 3$, or $e=3$ with $s\le 1$),
while any $(e,s)$ outside this region leads to abort or failure.
This protocol-event-level boundary parallels the empirically verified
DE-tier boundary of Section~\ref{sec:exp_de_recovery}.


\end{document}